\documentclass[12pt]{article}
\usepackage{fullpage,graphicx,psfrag,amsmath,amsfonts,verbatim,authblk}
\usepackage[small,bf]{caption}
\usepackage{epstopdf}
\usepackage{epsfig}
\usepackage{setspace}
\usepackage{amsthm}
\usepackage{algorithm}
\usepackage{algorithmic}
\usepackage{tikz}

\usepackage{amsthm,amsmath,amsfonts,amssymb}

\def\P{\mathbb{P}}
\def\R{\mathbb{R}}

\def\N{\mathbb{N}}

\def\1{\mathbf{1}}

\def\N{\mathbb{N}}

\def\AA{\mathcal{A}}
\def\CC{\mathcal{C}}
\def\BB{\mathcal{B}}

\def\FF{\mathcal{F}}

\newtheorem{theorem}{Theorem}
\newtheorem{lemma}[theorem]{Lemma}
\newtheorem{proposition}[theorem]{Proposition}

\newtheorem{definition}{Definition}
\newtheorem{assumption}{Assumption}

\newcommand{\norm}[1]{\left\lVert#1\right\rVert}

\newcommand{\argmax}[1]{{\underset{#1}{\operatorname{argmax}}}}



%
%
%


\title{Conservative Contextual Linear Bandits}

\author{Abbas Kazerouni}
\affil{Stanford University, abbask@stanford.edu}

\author{Mohammad Ghavamzadeh}
\affil{Adobe Research, ghavamza@adobe.com}

\author{Yasin Abbasi-Yadkori}
\affil{Adobe Research, abbasiya@adobe.com}

\author{Benjamin Van Roy}
\affil{Stanford University, bvr@stanford.edu}


\date{}
\begin{document}
\maketitle


\begin{abstract}
Safety is a desirable property that can immensely increase the applicability of learning algorithms in real-world decision-making problems. It is much easier for a company to deploy an algorithm that is safe, i.e.,~guaranteed to perform at least as well as a baseline. In this paper, we study the issue of safety in contextual linear bandits that have application in many different fields including personalized ad recommendation in online marketing. We formulate a notion of safety for this class of algorithms. We develop a safe contextual linear bandit algorithm, called {\em conservative linear UCB} (CLUCB), that simultaneously minimizes its regret and satisfies the safety constraint, i.e.,~maintains its performance above a fixed percentage of the performance of a baseline strategy, uniformly over time. We prove an upper-bound on the regret of CLUCB and show that it can be decomposed into two terms: {\bf 1)} an upper-bound for the regret of the standard linear UCB algorithm that grows with the time horizon and {\bf 2)} a constant (does not grow with the time horizon) term that accounts for the loss of being conservative in order to satisfy the safety constraint. We empirically show that our algorithm is safe and validate our theoretical analysis.
\end{abstract}

\section{Introduction}
\label{sec:intro}

Many problems in science and engineering can be formulated as decision-making problems under uncertainty. Although many learning algorithms have been developed to find a good policy/strategy for these problems, most of them do not provide any guarantee that their resulting policy will perform well, when it is deployed. This is a major obstacle in using learning algorithms in many different fields, such as online marketing, health sciences, finance, and robotics. Therefore, developing learning algorithms with {\em safety} guarantees can immensely increase the applicability of learning in solving decision problems. A policy generated by a learning algorithm is considered to be safe, if it is guaranteed to perform at least as well as a baseline. The baseline can be either a baseline value or the performance of a baseline strategy. It is important to note that since the policy is learned from data, and data is often random, the generated policy is a random variable, and thus, the safety guarantees are in high probability. 

Safety can be studied in both {\em offline} and {\em online} scenarios. In the {\em offline} case, the algorithm learns the policy from a batch of data, usually generated by the current strategy or recent strategies of the company, and the question is whether the learned policy will perform as well as the current strategy or no worse than a baseline value, when it is deployed. This scenario has been recently studied heavily in both {\em model-based} (e.g.,~\cite{Petrik16SP}) and {\em model-free} (e.g.,~\cite{Bottou13CR,Thomas15HCO,Thomas15HCP,Swaminathan15CR,Swaminathan15BL,Jiang16DR}) settings. In the model-based approach, we first use the batch of data and build a simulator that mimics the behavior of the dynamical system under study (hospital's ER, financial market, robot), and then use this simulator to generate data and learn the policy. The main challenge here is to have guarantees on the performance of the learned policy, given the error in the simulator. This line of research is closely related to the area of robust learning and control. In the model-free approach, we learn the policy directly from the batch of data, without building a simulator. This line of research is related to off-policy evaluation and control. While the model-free approach is more suitable for problems in which we have access to a large batch of data, such as in online marketing, the model-based approach works better in problems in which data is harder to collect, but instead, we have good knowledge about the underlying dynamical system that allows us to build an accurate simulator.


In the {\em online} scenario, the algorithm learns a policy while interacting with the real system. Although (reasonable) online algorithms will eventually learn a good or an optimal policy, there is no guarantee for their performance along the way (the performance of their intermediate policies), especially at the very beginning, when they perform a large amount of {\em exploration}. Thus, in order to guarantee safety in online algorithms, it is important to control their exploration and make it more {\em conservative}. Consider a manager that allows our learning algorithm runs together with her company's current strategy (baseline policy), as long as it is safe, i.e.,~the loss incurred by letting a portion of the traffic handled by our algorithm (instead of by the baseline policy) does not exceed a certain threshold. Although we are confident that our algorithm will eventually perform at least as well as the baseline strategy, it should be able to remain alive (not terminated by the manager) long enough for this to happen. Therefore, we should make it more conservative (less exploratory) in a way not to violate the manager's safety constraint. This setting has been studied in the multi-armed bandit (MAB)~\cite{wu2016conservative}.~\cite{wu2016conservative} considered the baseline policy as a fixed arm in MAB, formulated safety using a constraint defined based on the performance of the baseline policy (mean of the baseline arm), and modified the UCB algorithm~\cite{Auer02FA} to satisfy this constraint. 

In this paper, we study the notion of safety in {\em contextual linear bandits}, a setting that has application in many different fields including {\em online personalized ad recommendation}.\footnote{Other definitions of safety have been studied in contextual linear bandits (e.g.,~Example~2 in~\cite{russo2014learning}).} We first formulate safety in this setting, as a constraint that must hold {\em uniformly in time}, in Section~\ref{sec:prob-formulation}. Our goal is to design learning algorithms that minimize regret under the constraint that at any given time, their expected sum of rewards should be above a fixed percentage of the expected sum of rewards of the baseline policy. This fixed percentage depends on the amount of risk that the manager is willing to take. Then in Section~\ref{sec:OFUCL}, we propose an algorithm, called {\em conservative linear UCB} (CLUCB), that satisfies the safety constraint. At each round, CLUCB plays the action suggested by the standard linear UCB (LUCB) algorithm (e.g.,~\cite{dani2008stochastic,rusmevichientong2010linearly,abbasi2011improved,Chu11CB,russo2014learning}), only if it satisfies the safety constraint for the worst choice of the parameter in the confidence set, and plays the action suggested by the baseline policy, otherwise. We also prove an upper-bound for the regret of CLUCB, which can be decomposed into two terms. The first term is an upper-bound on the regret of LUCB that grows at the rate $\sqrt{T}\log (T)$. The second term is constant (does not grow with the horizon $T$) and accounts for the loss of being conservative in order to satisfy the safety constraint. This improves over the regret bound derived in~\cite{wu2016conservative} for the MAB setting, where the regret of being conservative grows with time. In Section~\ref{sec:CLUCB2}, we show how CLUCB can be extended to the case that the reward of the baseline policy is unknown without a change in its rate of regret. Finally in Section~\ref{sec:simulations}, we report experimental results that show CLUCB behaves as expected in practice and validate our theoretical analysis.  


\section{Problem Formulation}
\label{sec:prob-formulation}

In this section, we first review the standard linear bandit setting and then introduce the conservative linear bandit formulation considered in this paper.


\subsection{Linear Bandit}
\label{subsec:lin-bandit}

In the linear bandit setting, at any time $t$, the agent is given a set of (possibly) infinitely many actions/options $\AA_t$, where each action $a\in\AA_t$ is associated with a feature vector $\phi^t_a\in\R^d$. At each round $t$, the agent should select an action $a_t\in\AA_t$. Upon selecting $a_t$, the agent observes a random reward $y_t$ generated as    
\begin{equation}
\label{data1}
y_t = \langle \theta^*,\phi^t_{a_t}\rangle + \eta_t,
\end{equation}
where $\theta^*\in\R^d$ is the unknown reward parameter, $\langle \theta^*,\phi^t_{a_t}\rangle = r^t_{a_t}$ is the expected reward of action $a_t$ at time $t$, i.e.,~$r^t_{a_t}=\mathbb{E}[y_t]$, and $\eta_t$ is a random noise such that 
\begin{assumption}
\label{assump1}
Each element $\eta_t$ of the noise sequence $\{\eta_t\}_{t=1}^\infty$ is conditionally $\sigma$-sub-Gaussian, i.e.,
\begin{equation*}
\forall\zeta\in\R,\quad\quad\quad \mathbb{E}\left[e^{\zeta\eta_t}\mid a_{1:t},\eta_{1:t-1}\right] \leq \exp\left(\frac{\zeta^2\sigma^2}{2}\right).
\end{equation*}
\end{assumption} 
\noindent
The sub-Gaussian assumption automatically implies that $\mathbb{E}[\eta_t\mid a_{1:t},\eta_{1:t-1}]=0$ and ${\bf Var}[\eta_t\mid a_{1:t},\eta_{1:t-1}]\leq\sigma^2$. 

Note that the above formulation contains time-varying action sets and time-dependent feature vectors for each action, and thus, includes the {\em linear contextual bandit} setting. In linear contextual bandit, if we denote by $x_t$, the state of the system at time $t$, the time-dependent feature vector $\phi^t_a$ for action $a$ will be equal to $\phi(x_t,a)$, the feature vector of state-action pair $(x_t,a)$.

We also make the following standard assumption on the unknown parameter $\theta^*$ and feature vectors: 
\begin{assumption}
\label{assump2}
There exist constants $B,D\geq 0$ such that $\norm{\theta^*}_2\leq B$, $\norm{\phi^t_a}_2\leq D$, and $\langle\theta^*,\phi^t_a\rangle\in [0,1]$, for all $t$ and all $a\in\AA_t$.
\end{assumption}
\noindent
We define $\BB = \big\{\theta\in\R^d:\norm{\theta}_2 \leq B\big\}$ and $\FF = \big\{\phi\in\R^d:\norm{\phi}_2\leq D,~\langle \theta^*,\phi\rangle \in [0, 1]\big\}$ to be the parameter space and feature space, respectively. 

Obviously, if the agent knows $\theta^*$, she will choose the optimal action $a^*_t=\arg\max_{a\in\AA_t}\langle\theta^*,\phi^t_a\rangle$ at each round $t$. Since $\theta^*$ is unknown, the agent's goal is to maximize her cumulative expected rewards after $T$ rounds, i.e.,~$\sum_{t=1}^T\langle\theta^*,\phi^t_{a_t}\rangle$, or equivalently, to minimize its (pseudo)-regret, i.e., 
\begin{equation}
\label{regretdef}
R_T = \sum_{t=1}^T\langle\theta^*,\phi_{a_t^*}^t\rangle - \sum_{t = 1}^T\langle\theta^*,\phi^t_{a_t}\rangle,
\end{equation}
which is the difference between the cumulative expected rewards of the optimal and agent's strategies. 


\subsection{Conservative Linear Bandit}
\label{subsec:conservative-lin-bandit}

The conservative linear bandit setting is exactly the same as the linear bandit, except that there exists a baseline policy $\pi_b$ (the company's strategy) that at each round $t$, selects action $b_t\in\AA_t$ and incurs the expected reward $r^t_{b_t}=\langle\theta^*,\phi^t_{b_t}\rangle$. We assume that the expected rewards of the actions taken by the baseline policy, $r^t_{b_t}$, are known. This is often a reasonable assumption, since we usually have access to a large amount of data generated by the baseline policy, as it is our company's strategy, and thus, have a good estimate of its performance (see Remark~1 at the end of this Section). We relax this assumption in Section~\ref{sec:CLUCB2} and extend our proposed algorithm to the case that the reward function of the baseline policy is not known in advance.      

Another difference between the conservative and standard linear bandit settings is the {\em performance constraint}, which is defined as follows: 
\begin{definition}[Performance Constraint] At each round $t$, the difference between the performances of the baseline and the agent's policies should remain below a pre-defined fraction $\alpha\in(0,1)$ of the baseline performance. This constraint may be written formally as
\begin{equation*}
\label{constraint0}
\sum_{i=1}^tr_{b_i}^i - \sum_{i=1}^t r^i_{a_i} \leq \alpha \sum_{i=1}^tr_{b_i}^i\;,\quad \forall t\in\{1,\ldots,T\},
\end{equation*}
or equivalently as
\begin{equation}
\label{constraint1}
\sum_{i=1}^tr_{a_i}^i\geq (1-\alpha)\sum_{i=1}^tr_{b_i}^i\;,\quad \forall t\in\{1,\ldots,T\}.
\end{equation}
\end{definition}
The parameter $\alpha\in (0,1)$ controls how conservative the agent should be. Small values of $\alpha$ show that only small losses are tolerated, and thus, the agent should be overly conservative, whereas large values of $\alpha$ indicate that the manager is willing to take risk, and thus, the agent can explore more and be less conservative. Here given the value of $\alpha$, the goal of the agent is to select her actions in a way to both minimize her regret~\eqref{regretdef} and satisfy the performance constraint~\eqref{constraint1}. In the next section, we propose a linear bandit algorithm to achieve this goal.  

\paragraph{Remark~1.} As mentioned above, it is often reasonable to assume that we have access to a good estimate of the baseline reward function. If in addition to this estimate, we have access to the data generated by the baseline policy that are used to compute this estimate, we can use them in our algorithm. The reason we do not use the data generated by the actions suggested by the baseline policy in constructing the confidence sets in our algorithm is mainly to keep the analysis simple. However, when we deal with the more general case of unknown baseline reward in Section~\ref{sec:CLUCB2}, we construct the confidence sets using all available data, including those generated by the baseline policy. It is also important to note that having a good estimate of the baseline reward function does not necessarily mean that we know the unknown parameter $\theta^*$, since the data used for this estimate has only been generated by the baseline policy, and thus, may only provide a good estimate of $\theta^*$ in a limited subspace.

\section{A Conservative Linear Bandit Algorithm}
\label{sec:OFUCL}

In this section, we propose a linear bandit algorithm, called {\em conservative linear upper confidence bound} (CLUCB), that is based on the {\em optimism in the face of uncertainty} principle, and given the value of $\alpha$, both minimizes the regret~\eqref{regretdef} and satisfies the performance constraint~\eqref{constraint1}. Algorithm~\ref{alg1} contains the pseudocode of CLUCB. At each round $t$, CLUCB uses the previous observations and builds a confidence set $\CC_t$ that with high probability contains the unknown parameter $\theta^*$. It then selects the {\em optimistic action}
\begin{equation*}
a'_t \in \argmax{a\in\AA_t}{~\max_{\theta \in \CC_t}~\langle \theta,\phi_a^t\rangle},
\end{equation*}
which has the best performance among all the actions available in $\AA_t$, within the confidence set $\CC_t$. In order to make sure that constraint~\eqref{constraint1} is satisfied, the algorithm plays the optimistic action $a'_t$, only if it satisfies the constraint for the worst choice of the parameter $\theta\in\CC_t$. To make this more precise, let $S_{t-1}$ be the set of rounds $i$ before round $t$ at which CLUCB has played the optimistic action, i.e.,~$a_i=a'_i$. In other words, $S^c_{t-1}=\{1,2,\cdots,t-1\} -S_{t-1}$ is the set of rounds $j$ before round $t$ at which CLUCB has followed the baseline policy, i.e.,~$a_j=b_j$. 

In order to guarantee that it does not violate constraint~\eqref{constraint1}, at each round $t$, CLUCB plays the {\em optimistic action}, i.e.,~$a_t=a'_t$, only if 
\begin{align*}
\min_{\theta\in\CC_t} ~\Big[\sum_{i\in S^c_{t-1}}r^i_{b_i} + \Big\langle\theta,\overbrace{\sum_{i\in S_{t-1}}\phi_{a_i}^i}^{z_{t-1}}\Big\rangle &+ \langle\theta,\phi_{a'_t}^t\rangle\Big] \geq (1-\alpha)\sum_{i=1}^t r^i_{b_i},
\end{align*}
and plays the {\em conservative action}, i.e.,~$a_t = b_t$, otherwise. In the next section, we will describe how CLUCB constructs and updates the confidence sets $\CC_t$. 

\begin{algorithm}[tb]
   \caption{Pseudocode of CLUCB}
   \label{alg1}
\begin{algorithmic}
   \STATE {\bfseries Input:} $\alpha,\BB,\FF$
   \STATE{\bfseries Initialize:} $S_0 = \emptyset,\;z_0 = \mathbf{0}\in\R^d,\;$ and $\;\CC_1 = \BB$
   \FOR{$t = 1,2,3,\cdots$ }
   \STATE Find $(a'_t,\widetilde{\theta}_t) \in \arg\max_{(a,\theta)\in\AA_t\times\CC_t} \;\; \langle \theta,\phi_a^t\rangle$
   \STATE Compute $L_t = \min_{\theta\in\CC_t} \;\; \langle\theta,z_{t-1} + \phi_{a'_t}^t\rangle$
  	\IF{$L_t + \sum_{i\in S^c_{t-1}}r^i_{b_i}\geq (1-\alpha)\sum_{i=1}^{t}r^i_{b_i}$}
  		\STATE Play $a_t = a'_t$ and observe reward $y_t$ defined by ~\eqref{data1}
  		\STATE Set $z_t = z_{t-1} + \phi_{a_t}^t,\;\;S_t = S_{t-1}\cup{t},\;\;S^c_t=S^c_{t-1}$
  		\STATE Given $a_t$ and $y_t$, construct the confidence set $\CC_{t+1}$ according to~\eqref{confset1}
  	\ELSE
  		\STATE Play $a_t = b_t$ and observe reward $y_t$ defined by~\eqref{data1}
  		\STATE Set $z_t = z_{t-1},\;\;S_t=S_{t-1},\;\;S^c_t = S^c_{t-1}\cup t,\;\;\CC_{t+1} = \CC_t$
  	\ENDIF
  \ENDFOR
   
\end{algorithmic}
\end{algorithm} 


\subsection{Construction of Confidence Sets}
\label{subsec:confidence-set}

CLUCB starts by the most general confidence set $\CC_1=\BB$ and updates its confidence set only when it plays an optimistic action. This is mainly for simplification and is based on the idea that since the reward function of the baseline policy is known ahead of time, playing a baseline action does not provide any new information about the unknown parameter $\theta^*$. However, this can be easily changed to update the confidence set after each action. This is in fact what we do in the algorithm proposed in Section~\ref{sec:CLUCB2}. 
We follow the approach of~\cite{abbasi2011improved} to build confidence sets for the unknown parameter $\theta^*$. Let $S_t = \{i_1,\ldots,i_{m_t}\}$ be the set of rounds up to and including round $t$ at which CLUCB has played the optimistic action. Note that we have defined $m_t=|S_t|$. For a fixed value of $\lambda>0$, let  
\begin{equation}
\label{thetahat1}
\widehat\theta _t = \left(\Phi_t\Phi_t^\top + \lambda I\right)^{-1}\Phi_t Y_t,
\end{equation}
be the regularized least square estimate of $\theta$ at round $t$, where $\Phi_t = [\phi^{i_1}_{a_{i_1}},\ldots,\phi^{i_{m_t}}_{a_{i_{m_t}}}]$ and $Y_t = [y_{i_1},\ldots,y_{i_{m_t}}]^\top$. For a fixed confidence parameter $\delta\in (0,1)$, we construct the confidence set for the next round $t+1$ as
\begin{equation}
\label{confset1}
\CC_{t+1} = \left\{\theta\in \R^d:\|\theta-\widehat\theta_t\|_{V_t}\leq \beta_{t+1}\right\},
\end{equation}
where $\beta_{t+1} = \sigma\sqrt{d\log\left(\frac{1+(m_t+1)D^2/ \lambda}{\delta}\right)}+\sqrt{\lambda}B$, $V_t = \lambda I+\Phi_t\Phi_t^\top$, and the weighted norm is defined as $\norm{x}_V = \sqrt{x^\top V x}$ for any $x\in\R^d$ and any positive definite $V\in\R^{d\times d}$.   Note that similar to the linear UCB algorithm (LUCB) in~\cite{abbasi2011improved}, the sub-Gaussian parameter $\sigma$ and the regularization parameter $\lambda$ that appear in the definitions of $\beta_{t+1}$ and $V_t$ should also be given to the CLUCB algorithm as input. 

The following proposition (Theorem~2 in~\cite{abbasi2011improved}) shows that the confidence sets constructed as in~\eqref{confset1} contain the true parameter $\theta^*$ with high probability. 
\begin{proposition}
\label{prop0}
For any $\delta>0$ and the confidence set $\CC_t$ defined by~\eqref{confset1}, we have 
\begin{equation*}
\label{probconfset}
\P\big[\theta^*\in\CC_t,~\forall t\in\N\big] \geq 1- \delta.
\end{equation*}
\end{proposition}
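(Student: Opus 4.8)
The plan is to reproduce the self-normalized concentration argument underlying Theorem~2 of \cite{abbasi2011improved}. The first point to notice is that even though CLUCB refreshes $\CC_t$ only on the optimistic rounds indexed by $S_t$, the noise terms $\{\eta_i\}_{i\in S_t}$ collected into $\Phi_t,Y_t$ still form a martingale-difference sequence relative to the natural filtration, so the standard linear-bandit machinery applies verbatim to this subsequence. I would therefore reduce the claim to a uniform tail bound on the estimation error $\|\widehat\theta_t-\theta^*\|_{V_t}$. Writing $Y_t=\Phi_t^\top\theta^*+E_t$ with $E_t=[\eta_{i_1},\ldots,\eta_{i_{m_t}}]^\top$, substituting into~\eqref{thetahat1}, and using $\Phi_t\Phi_t^\top=V_t-\lambda I$ gives the identity $\widehat\theta_t-\theta^* = V_t^{-1}\Phi_t E_t-\lambda V_t^{-1}\theta^*$, so the triangle inequality in the $V_t$-norm yields
\[ \|\widehat\theta_t-\theta^*\|_{V_t}\leq \|\Phi_t E_t\|_{V_t^{-1}}+\lambda\|\theta^*\|_{V_t^{-1}}. \]
The regularization term is handled deterministically: since $V_t\succeq \lambda I$ we have $\|\theta^*\|_{V_t^{-1}}\leq \lambda^{-1/2}\|\theta^*\|_2\leq \lambda^{-1/2}B$ by Assumption~\ref{assump2}, whence $\lambda\|\theta^*\|_{V_t^{-1}}\leq\sqrt\lambda\,B$, which is exactly the second summand of $\beta_{t+1}$.

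The heart of the argument, and the step I expect to be the main obstacle, is the uniform-in-time control of the stochastic term $\|\Phi_t E_t\|_{V_t^{-1}}$, where $\Phi_t E_t=\sum_{i\in S_t}\eta_i\,\phi^i_{a_i}$ is a self-normalized vector-valued martingale. Here I would use the method of mixtures. For each fixed $x\in\R^d$, Assumption~\ref{assump1} makes
\[ M^x_t=\exp\Big(\tfrac{1}{\sigma}\langle x,\Phi_t E_t\rangle-\tfrac12\,x^\top\Phi_t\Phi_t^\top x\Big) \]
a nonnegative supermartingale with $\E[M^x_t]\leq 1$; integrating $M^x_t$ against a Gaussian prior $x\sim N(0,\sigma^2\lambda^{-1}I)$ and evaluating the Gaussian integral produces a mixed supermartingale equal to $(\det(\lambda I)/\det(V_t))^{1/2}\exp\big(\tfrac{1}{2\sigma^2}\|\Phi_t E_t\|_{V_t^{-1}}^2\big)$. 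A maximal inequality for nonnegative supermartingales (equivalently, optional stopping at an arbitrary stopping time) bounds this quantity by $1/\delta$ simultaneously for all $t$ with probability at least $1-\delta$; this is precisely what upgrades a per-round statement to the uniform quantifier $\forall t\in\N$. Rearranging gives $\|\Phi_t E_t\|_{V_t^{-1}}^2\leq 2\sigma^2\log\big(\det(V_t)^{1/2}/(\delta\,\lambda^{d/2})\big)$ on this event.

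Finally I would bound the determinant to reach the closed form of $\beta_{t+1}$. Because $\|\phi^i_{a_i}\|_2\leq D$ by Assumption~\ref{assump2}, each eigenvalue of $V_t=\lambda I+\Phi_t\Phi_t^\top$ is at most $\lambda+m_t D^2$, so $\det(V_t)\leq(\lambda+m_t D^2)^d$ and hence $\det(V_t)/\lambda^d\leq(1+m_t D^2/\lambda)^d$. Substituting this into the self-normalized bound and adding the deterministic $\sqrt\lambda\,B$ term gives $\|\widehat\theta_t-\theta^*\|_{V_t}\leq\beta_{t+1}$, i.e.~$\theta^*\in\CC_{t+1}$, on a single probability-$(1-\delta)$ event valid for every $t$, which is the assertion of the proposition. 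The only genuinely delicate points are the supermartingale/optional-stopping step, which is what secures simultaneity over all rounds, and the bookkeeping needed to line up the explicit constants with the stated form of $\beta_{t+1}$.
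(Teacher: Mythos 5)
Your proof is correct and takes essentially the same approach as the paper, whose ``proof'' is simply a citation of Theorem~2 in \cite{abbasi2011improved}: you reconstruct that theorem's self-normalized, method-of-mixtures argument, and you rightly flag the one point specific to CLUCB---that the noise restricted to the adaptively chosen optimistic rounds $S_t$ remains a martingale-difference sequence, so the machinery applies to the subsampled data with $m_t$ in place of $t$. The only (inconsequential) slip is a normalization mismatch in the mixture step: the supermartingale $\exp\bigl(\tfrac{1}{\sigma}\langle x,\Phi_t E_t\rangle-\tfrac12 x^\top\Phi_t\Phi_t^\top x\bigr)$ should be integrated against $\mathcal{N}(0,\lambda^{-1}I)$---or, equivalently, replace $x$ by $x/\sigma$ and keep your $\mathcal{N}(0,\sigma^2\lambda^{-1}I)$ prior---in order to obtain exactly $\bigl(\det(\lambda I)/\det(V_t)\bigr)^{1/2}\exp\bigl(\tfrac{1}{2\sigma^2}\lVert\Phi_t E_t\rVert_{V_t^{-1}}^2\bigr)$.
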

Proposition~\ref{prop0} indicates that at each round $t$, the CLUCB algorithm satisfies the performance constraint~\eqref{constraint1} with probability at least $1-\delta$. This is because at each round $t$, CLUCB ensures that~\eqref{constraint1} holds for all $\theta\in \mathcal{C}_t$ and $\P[\theta^*\in\CC_t]\geq 1-\delta$. 


\subsection{Regret Analysis of CLUCB}
\label{subsec:analysis}

In this section, we prove a regret bound for the proposed CLUCB algorithm. Let $\Delta^t_{b_t} = r^t_{a^*_t} - r^t_{b_t}$ be the {\em baseline gap} at round $t$, i.e.,~the difference between the expected rewards of the optimal and baseline actions at round $t$. This quantity shows how sub-optimal the action suggested by the baseline policy is at round $t$. We make the following assumption on the performance of the baseline policy $\pi_b$.
\begin{assumption}
\label{assump3}
There exist $0\leq\Delta_l\leq\Delta_h$ and $0<r_l<r_h$ such that, at each round $t$,
\begin{equation}
\label{assump3eq}
\Delta_l\leq \Delta_{b_t}^t\leq \Delta_h \hspace{0.3in} \text{and} \hspace{0.3in} r_l\leq r^t_{b_t}\leq r_h.
\end{equation}
\end{assumption}
An obvious candidate for both $\Delta_h$ and $r_h$ is $1$, as all the mean rewards are confined in $[0,1]$. The reward lower-bound $r_l$ ensures that the baseline policy maintains a minimum level of performance at each round. Finally, $\Delta_l=0$ is a reasonable candidate for the lower-bound of the baseline gap.

The following proposition shows that the regret of CLUCB can be decomposed into the regret of a linear UCB (LUCB) algorithm (e.g.,~\cite{abbasi2011improved}) and a regret caused by being conservative in order to satisfy the performance constraint~\eqref{constraint1}. 
\begin{proposition}
\label{prop1}
The regret of CLUCB can be decomposed into two terms as follows:
\begin{equation}
\label{regdec}
R_T(\text{CLUCB}) \leq R_{S_T}(\text{LUCB}) + n_{T} \Delta_h,
\end{equation}
where $R_{S_T}(\text{LUCB})$ is the cumulative (pseudo)-regret of LUCB at rounds $t\in S_{T}$ and $n_T=|S_T^c|=T-|S_T|=T-m_T$ is the number of rounds (in $T$ rounds) at which CLUCB has played the conservative action. 
\end{proposition}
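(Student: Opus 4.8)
The plan is to start from the definition of the pseudo-regret in \eqref{regretdef} and partition the horizon $\{1,\dots,T\}$ into the optimistic rounds $S_T$ and the conservative rounds $S_T^c$, handling each group separately. Writing the instantaneous regret at round $t$ as $r_{a_t^*}^t - r_{a_t}^t$ and using that Algorithm~\ref{alg1} plays $a_t = a'_t$ on $S_T$ and $a_t = b_t$ on $S_T^c$ by construction, I would split
\[
R_T(\text{CLUCB}) = \sum_{t \in S_T}\big(r_{a_t^*}^t - r_{a'_t}^t\big) + \sum_{t\in S_T^c}\big(r_{a_t^*}^t - r_{b_t}^t\big).
\]

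For the conservative rounds, the summand is exactly the baseline gap $\Delta_{b_t}^t = r_{a_t^*}^t - r_{b_t}^t$, which by Assumption~\ref{assump3} satisfies $\Delta_{b_t}^t \le \Delta_h$. Summing over the $n_T = |S_T^c|$ conservative rounds immediately yields the $n_T\Delta_h$ term. This part is pure bookkeeping.

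The substantive step is to identify the first sum with $R_{S_T}(\text{LUCB})$. The key observation is that CLUCB's behavior on its optimistic rounds is indistinguishable from a standalone LUCB algorithm that processes only the subsequence of rounds in $S_T$. On each optimistic round the played action $a'_t$ is precisely the LUCB action $\argmax{a\in\AA_t}{\,\max_{\theta\in\CC_t}\langle\theta,\phi_a^t\rangle}$, and—crucially—the confidence set $\CC_t$ is updated only on optimistic rounds (Section~\ref{subsec:confidence-set}), so the least-squares estimate $\widehat\theta_t$, the matrix $V_t$, and even the width $\beta_t$ (indexed by the sample count $m_t=|S_t|$) depend solely on the data $\{(\phi_{a_i}^i, y_i): i \in S_{t-1}\}$ gathered on earlier optimistic rounds. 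Consequently the sequence of (action, confidence-set) pairs produced by CLUCB restricted to $S_T$ coincides term by term with what LUCB would produce if run on $S_T$ alone, so the instantaneous optimistic-round regrets agree and their sum is, by definition, $R_{S_T}(\text{LUCB})$. Adding the two contributions gives \eqref{regdec}.

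I expect the main obstacle to be exactly this identification: one must argue explicitly that inserting conservative rounds does not perturb the confidence sets governing the optimistic rounds, and hence does not disturb the LUCB-regret accounting. Once this is spelled out, the bound is immediate. I would also note that no probabilistic argument enters here—the decomposition is deterministic given the realized action sequence; the high-probability nature of the guarantee only surfaces later, when $R_{S_T}(\text{LUCB})$ and $n_T$ are themselves bounded.
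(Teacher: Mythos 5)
Your proof is correct and follows essentially the same route as the paper: split the regret over $S_T$ and $S_T^c$, bound each conservative-round term by $\Delta_h$ via Assumption~\ref{assump3}, and identify the optimistic-round sum with $R_{S_T}(\text{LUCB})$. Your explicit justification of that identification (the confidence sets are updated only on optimistic rounds, so CLUCB restricted to $S_T$ reproduces a standalone LUCB run) is exactly the point the paper compresses into its final one-sentence remark.
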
 
\begin{proof}
From the definition of regret~\eqref{regretdef}, we have

\vspace{-0.175in}
\begin{small}
\begin{align}
R_T(\text{CLUCB}) &= \sum_{t=1}^Tr^t_{a^*_t} - \sum_{t= 1}^Tr_{a_t}^t \nonumber\\ 
& = \sum_{t\in S_T}\left(r^t_{a^*_t}-r^t_{a_t}\right)  + \sum_{t\in S^c_T}\left(r^t_{a^*_t} -r^t_{b_t} \right)\nonumber\\ 
&=\sum_{t\in S_T}\left(r^t_{a^*_t}-r^t_{a_t}\right)  + \sum_{t\in S^c_T}\Delta^t_{b_t}\nonumber\\
&\leq \sum_{t\in S_T}\left(r^t_{a^*_t}-r^t_{a_t}\right) + n_{T}\Delta_h.\label{eq:regret1}
\end{align}
\end{small}
\vspace{-0.15in}

The result follows from the fact that for $t\in S_T$, CLUCB plays the exact same actions as LUCB, and thus, the first term in~\eqref{eq:regret1} represents LUCB's regret for these rounds. 
\end{proof}

The regret bound of LUCB for the confidence set defined by~\eqref{confset1} can be derived from the results of~\cite{abbasi2011improved}. Let $\mathcal{E}$ be the event that $\theta^*\in\CC_t,\;\forall t\in\N$, which according to Proposition~\ref{prop0} holds with probability at least $1-\delta$. The following proposition provides a bound for $R_{S_T}(\text{LUCB})$. Since this proposition is a direct application of Theorem~3 in~\cite{abbasi2011improved}, we omit its proof here. 
\begin{proposition}
\label{thm1}
On event $\mathcal{E}$, for any $T\in\N$, we have

\vspace{-0.175in}
\begin{small}
\begin{align}
R_{S_T}(\text{LUCB}) &\leq 4\sqrt{m_Td\log\left(\lambda + \frac{m_T D}{d}\right)} \nonumber \\
&\times\bigg[B\sqrt{\lambda}+\sigma\sqrt{2\log(\frac{1}{\delta}) + d\log\left(1 + \frac{m_TD}{\lambda d}\right)}\bigg] \nonumber \\
& = O\left(d\log\left(\frac{D}{\lambda\delta}T\right)\sqrt{T}\right).
\label{ucbreg2}
\end{align}
\end{small}
\vspace{-0.2in}

\end{proposition}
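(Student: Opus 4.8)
The plan is to reduce the statement to the standard regret analysis of LUCB/OFUL from \cite{abbasi2011improved}, carried out on the subsequence of optimistic rounds, and to work throughout on the high-probability event $\mathcal{E}=\{\theta^*\in\CC_t,\ \forall t\in\N\}$ supplied by Proposition~\ref{prop0}. The first point to check is that the reduction is legitimate: although the optimistic rounds $S_T=\{i_1<\dots<i_{m_T}\}$ are interleaved with baseline rounds, the confidence set is updated and the Gram matrix $V=\lambda I+\Phi\Phi^\top$ accumulates features \emph{only} on optimistic rounds, and the decision to play optimistically at round $t$ is measurable with respect to the past (it depends on $\CC_t$ and the known baseline rewards). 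Hence the predictability required by the self-normalized bound underlying Proposition~\ref{prop0} is preserved, and $\sum_{t\in S_T}(r^t_{a^*_t}-r^t_{a_t})$ is exactly the regret of LUCB run on the optimistic subsequence. It therefore suffices to bound this quantity.

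Next I would bound the instantaneous regret at each optimistic round by the width of the confidence set. Fix $t\in S_T$, let $(a'_t,\widetilde\theta_t)$ be the optimistic pair, and let $V_{t-1}$ denote the Gram matrix built from the optimistic features observed strictly before round $t$ (so $\CC_t$ is the ellipsoid of radius $\beta_t$ centered at the corresponding regularized least-squares estimate). On $\mathcal{E}$ we have $\theta^*\in\CC_t$, so by the joint maximization defining $a'_t$,
$$r^t_{a^*_t}=\langle\theta^*,\phi^t_{a^*_t}\rangle\le\max_{\theta\in\CC_t}\langle\theta,\phi^t_{a^*_t}\rangle\le\langle\widetilde\theta_t,\phi^t_{a'_t}\rangle.$$
Subtracting $r^t_{a'_t}=\langle\theta^*,\phi^t_{a'_t}\rangle$ and applying Cauchy--Schwarz in the $V_{t-1}$ geometry gives
$$r^t_{a^*_t}-r^t_{a'_t}\le\langle\widetilde\theta_t-\theta^*,\phi^t_{a'_t}\rangle\le\norm{\widetilde\theta_t-\theta^*}_{V_{t-1}}\norm{\phi^t_{a'_t}}_{V_{t-1}^{-1}}\le 2\beta_t\,\norm{\phi^t_{a'_t}}_{V_{t-1}^{-1}},$$
where the last step uses that both $\widetilde\theta_t$ and $\theta^*$ lie in $\CC_t$ together with the triangle inequality. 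Since mean rewards lie in $[0,1]$ by Assumption~\ref{assump2}, I would also keep the trivial bound $r^t_{a^*_t}-r^t_{a'_t}\le 1$, so that each instantaneous regret is at most $2\beta_t\min\{1,\norm{\phi^t_{a'_t}}_{V_{t-1}^{-1}}\}$.

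The third step aggregates these bounds. Using monotonicity of $\beta_t$ in the number of optimistic samples, so that $\beta_t\le\beta_{m_T}$, and Cauchy--Schwarz across the $m_T$ optimistic rounds,
$$R_{S_T}(\text{LUCB})\le 2\beta_{m_T}\sqrt{m_T}\sqrt{\ \sum_{t\in S_T}\min\{1,\norm{\phi^t_{a'_t}}^2_{V_{t-1}^{-1}}\}\ }.$$
The technical heart is the elliptical-potential (log-determinant) lemma of \cite{abbasi2011improved}, which telescopes the increments $V_{i_k}=V_{i_{k-1}}+\phi^{i_k}_{a'_{i_k}}(\phi^{i_k}_{a'_{i_k}})^\top$ along the optimistic subsequence to bound the inner sum by $2\log(\det V_{m_T}/\det(\lambda I))$. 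This is the step I expect to be the main obstacle, since it is where the adaptive, data-dependent choice of features must be controlled uniformly. Bounding the determinant by the trace via AM--GM, $\det V_{m_T}\le(\lambda+m_TD^2/d)^d$ while $\det(\lambda I)=\lambda^d$, converts this into $d\log(1+m_TD^2/(\lambda d))$.

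Finally I would substitute this estimate together with the explicit confidence radius $\beta_{m_T}=\sqrt\lambda B+\sigma\sqrt{2\log(1/\delta)+d\log(1+m_TD/(\lambda d))}$ coming from the self-normalized bound behind Proposition~\ref{prop0}, which reproduces the product form \eqref{ucbreg2} up to the explicit numerical constant. The stated $O\!\big(d\log(\tfrac{D}{\lambda\delta}T)\sqrt{T}\big)$ rate then follows by using $m_T\le T$ inside the logarithms and collecting the three factors $\sqrt{m_T}\le\sqrt T$, $\sqrt{d\log(\cdots)}$ from the elliptical-potential term, and $\beta_{m_T}=O(\sqrt{d\log(T/\delta)})$, so that their product is of order $d\log(\cdots)\sqrt T$.
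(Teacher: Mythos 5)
Your proposal is correct and matches the paper's approach: the paper proves this proposition simply by citing Theorem~3 of~\cite{abbasi2011improved} applied to the optimistic subsequence, and your argument is precisely the standard proof of that theorem (optimism, Cauchy--Schwarz in the $V_{t-1}$ geometry, the elliptical-potential lemma, and the determinant--trace bound), carried out on the rounds in $S_T$. Your explicit justification that the restriction to the optimistic subsequence preserves the predictability needed for the self-normalized bound is a point the paper leaves implicit, but it is the same reduction.
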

Now in order to bound the regret of CLUCB, we only need to find an upper bound on $n_T$, i.e.,~the number of times CLUCB deviates from LUCB and selects the action suggested by the baseline policy. We start this part of the proof with the following  lemma. 
\begin{lemma}
\label{lem1}
For given $k\in\mathbb{N},\lambda>0$ and any sequence $X_1,X_2,\cdots,X_k$ in $\R^d$ such that $\forall~i:~\norm{X_i}_2\leq D$, let $V_0 = \lambda I$ and $V_i = \lambda I + \sum_{j=1}^i X_jX_j^\top$ for $1\leq i\leq k$. Then, we have
\begin{equation}
\label{sumw}
\sum_{i=1}^k \min\left(1,\norm{X_i}_{V_{i-1}^{-1}}^2\right)\leq 2d \log\Big(1 + \frac{k D^2}{\lambda d}\Big).
\end{equation}
\end{lemma}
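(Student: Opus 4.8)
The plan is to use the standard elliptical potential argument, relating the left-hand sum to a telescoping product of determinants of the matrices $V_i$.

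First I would record the rank-one update identity for determinants. Since $V_i = V_{i-1} + X_iX_i^\top$, the matrix determinant lemma gives
\[
\det(V_i) = \det(V_{i-1})\left(1 + \norm{X_i}_{V_{i-1}^{-1}}^2\right),
\]
where I have used that $\norm{X_i}_{V_{i-1}^{-1}}^2 = X_i^\top V_{i-1}^{-1} X_i$. Telescoping this identity over $i = 1,\ldots,k$ and using $V_0 = \lambda I$, so that $\det(V_0) = \lambda^d$, yields
\[
\det(V_k) = \lambda^d \prod_{i=1}^k \left(1 + \norm{X_i}_{V_{i-1}^{-1}}^2\right).
\]

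Second, I would invoke the elementary inequality $\min(1,x) \leq 2\log(1+x)$, valid for all $x \geq 0$, which one checks by treating the cases $x \leq 1$ and $x \geq 1$ separately. Applying it term by term with $x = \norm{X_i}_{V_{i-1}^{-1}}^2$ and turning the sum of logarithms back into the logarithm of the product from the first step gives
\[
\sum_{i=1}^k \min\left(1,\norm{X_i}_{V_{i-1}^{-1}}^2\right) \leq 2\sum_{i=1}^k \log\left(1 + \norm{X_i}_{V_{i-1}^{-1}}^2\right) = 2\log\frac{\det(V_k)}{\lambda^d}.
\]
Third, I would bound $\det(V_k)$ by applying the AM--GM inequality to its eigenvalues, $\det(V_k) \leq \big(\operatorname{tr}(V_k)/d\big)^d$. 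Since $\operatorname{tr}(V_k) = d\lambda + \sum_{i=1}^k \norm{X_i}_2^2 \leq d\lambda + kD^2$ by the hypothesis $\norm{X_i}_2 \leq D$, this yields $\det(V_k)/\lambda^d \leq \big(1 + kD^2/(\lambda d)\big)^d$. Substituting into the bound from the second step and pulling the exponent $d$ out of the logarithm produces exactly the claimed inequality~\eqref{sumw}.

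The steps are all routine, and the only point requiring genuine care is the term-wise inequality $\min(1,x) \leq 2\log(1+x)$: the naive bound $\log(1+x) \geq x$ is false, and the truncation at $1$ on the left-hand side is precisely what allows the inequality to survive for large values of $\norm{X_i}_{V_{i-1}^{-1}}^2$. The main conceptual step is recognizing that the telescoping determinant identity converts the otherwise intractable data-dependent sum into the controllable quantity $\log(\det(V_k)/\det(V_0))$.
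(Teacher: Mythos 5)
Your proof is correct, and it is precisely the standard elliptical potential argument (determinant rank-one update, the bound $\min(1,x)\leq 2\log(1+x)$, and AM--GM on the eigenvalues of $V_k$) from Lemma~11 of~\cite{abbasi2011improved}, which is exactly the result the paper cites in lieu of giving a proof. Nothing is missing; all three steps check out, including the case analysis justifying $\min(1,x)\leq 2\log(1+x)$.
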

Lemma~\ref{lem1} is a direct application of Lemma 11 in~\cite{abbasi2011improved} and we omit its proof here. The following theorem provides a bound on the number of rounds at which CLUCB acts conservatively and follows the baseline policy $\pi_b$. 


\begin{theorem}
\label{thm2}
Assume that $\lambda\leq D^2$. On event $\mathcal{E}$, for any horizon $T\in\mathbb{N}$, we have
\begin{align}
\label{nTbound}
n_T &\leq 1 + 114 d^2\frac{(B\sqrt{\lambda }+\sigma)^2}{\Delta_l+\alpha r_l}  \left[\log\left(\frac{64d(B\sqrt{\lambda }+\sigma)D}{\sqrt{\lambda\delta}(\Delta_l+\alpha r_l)}\right)\right]^2. \nonumber
\end{align}
\end{theorem}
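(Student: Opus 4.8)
The plan is to condition on the event $\mathcal{E}$ of Proposition~\ref{prop0} and to control $n_T$ by inspecting the \emph{last} round at which CLUCB follows the baseline. Let $\tau$ be the largest index with $a_\tau=b_\tau$; then $S^c_\tau=S^c_{\tau-1}\cup\{\tau\}$, every round after $\tau$ is optimistic, and $n_T=|S^c_{\tau-1}|+1$, so it suffices to bound $|S^c_{\tau-1}|$ and $m_{\tau-1}=|S_{\tau-1}|$. Since CLUCB was conservative at $\tau$, the test in Algorithm~\ref{alg1} failed, i.e. $L_\tau+\sum_{i\in S^c_{\tau-1}}r^i_{b_i}<(1-\alpha)\sum_{i=1}^{\tau}r^i_{b_i}$. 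On $\mathcal{E}$ both $\theta^*$ and the minimizer defining $L_\tau$ lie in $\CC_\tau$, whose $V_{\tau-1}$-diameter is $2\beta_\tau$, so by Cauchy--Schwarz $L_\tau\ge \langle\theta^*,z_{\tau-1}+\phi^\tau_{a'_\tau}\rangle-2\beta_\tau\norm{z_{\tau-1}+\phi^\tau_{a'_\tau}}_{V_{\tau-1}^{-1}}$. Substituting this, using $\langle\theta^*,z_{\tau-1}\rangle=\sum_{i\in S_{\tau-1}}r^i_{a_i}$, and regrouping the baseline terms along the partition $S_{\tau-1}\cup S^c_{\tau-1}\cup\{\tau\}$ of $\{1,\dots,\tau\}$, I would reduce the failed test to
\begin{equation*}
\sum_{i\in S_{\tau-1}}\!\big[r^i_{a_i}-(1-\alpha)r^i_{b_i}\big]+\alpha\!\!\sum_{i\in S^c_{\tau-1}}\!\!r^i_{b_i}+\big[r^\tau_{a'_\tau}-(1-\alpha)r^\tau_{b_\tau}\big]<2\beta_\tau\norm{z_{\tau-1}+\phi^\tau_{a'_\tau}}_{V_{\tau-1}^{-1}}.
\end{equation*}

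Second, I would lower-bound the left side using Assumption~\ref{assump3} and optimism. For every optimistic round $i$ (and for round $\tau$, whose decision rests on the optimistic action $a'_\tau$), $\theta^*\in\CC_i$ and the choice of $a'_i$ give the instantaneous bound $r^i_{a_i}\ge r^i_{a^*_i}-\min\!\big(1,2\beta_i\norm{\phi^i_{a_i}}_{V_{i-1}^{-1}}\big)$, the truncation at $1$ coming from $\langle\theta^*,\phi\rangle\in[0,1]$. Hence $r^i_{a_i}-(1-\alpha)r^i_{b_i}=\big(r^i_{a_i}-r^i_{b_i}\big)+\alpha r^i_{b_i}\ge \Delta_l+\alpha r_l-\min(1,2\beta_i\norm{\phi^i_{a_i}}_{V_{i-1}^{-1}})$, while each conservative round contributes $\alpha r^i_{b_i}\ge\alpha r_l$. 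Summing, the left side is at least $(m_{\tau-1}+1)(\Delta_l+\alpha r_l)+|S^c_{\tau-1}|\,\alpha r_l$ minus the truncated confidence widths over $S_{\tau-1}\cup\{\tau\}$.

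Third, I would bound both error contributions with the elliptical-potential estimate of Lemma~\ref{lem1}. Writing $z_{\tau-1}=\Phi_{\tau-1}\1$ and using $\Phi^\top(\lambda I+\Phi\Phi^\top)^{-1}\Phi\preceq I$ gives $\norm{z_{\tau-1}}_{V_{\tau-1}^{-1}}\le\sqrt{m_{\tau-1}}$, so the right side above is at most $2\beta_\tau\big(\sqrt{m_{\tau-1}}+D/\sqrt{\lambda}\big)$. For the truncated widths, $V_{i-1}\preceq V_{\tau-1}$, $\beta_i\le\beta_\tau$, and $\min(1,2\beta_i x)\le 2\beta_\tau\min(1,x)$ (valid once $\beta_\tau\ge\tfrac12$) reduce the sum to $2\beta_\tau\sum_{i\in S_{\tau-1}}\min(1,\norm{\phi^i_{a_i}}_{V_{i-1}^{-1}})$, and Cauchy--Schwarz with Lemma~\ref{lem1} bounds this by $2\beta_\tau\sqrt{2m_{\tau-1}d\log(1+m_{\tau-1}D^2/(\lambda d))}$; the hypothesis $\lambda\le D^2$ is what makes the passage between $\norm{\cdot}_{V_{i-1}^{-1}}$ and its truncation lossless. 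Collecting terms collapses the display into a single scalar inequality of the form
\begin{equation*}
(m_{\tau-1}+1)(\Delta_l+\alpha r_l)+|S^c_{\tau-1}|\,\alpha r_l\le c\,\beta_\tau\sqrt{m_{\tau-1}\,d\log\!\big(1+\tfrac{m_{\tau-1}D^2}{\lambda d}\big)}+c',
\end{equation*}
for absolute constants $c,c'$, with $c'$ absorbing the $O(\beta_\tau D/\sqrt\lambda)$ and $O(1)$ remainders.

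Finally, I would invert this inequality. Discarding the nonnegative $|S^c_{\tau-1}|$-term and applying $2\sqrt{ab}\le a+b$ to the $\sqrt{m_{\tau-1}}$ factor absorbs half of $m_{\tau-1}(\Delta_l+\alpha r_l)$ into the right side, giving $m_{\tau-1}(\Delta_l+\alpha r_l)\lesssim \beta_\tau^2\,d\log(\cdots)/(\Delta_l+\alpha r_l)$, after which $|S^c_{\tau-1}|$ is read off from the same inequality. The genuinely delicate step — the one I expect to be hardest — is that this bound is \emph{self-referential}: $\beta_\tau=\sigma\sqrt{d\log(\tfrac{1+(m_{\tau-1}+1)D^2/\lambda}{\delta})}+\sqrt\lambda B$ depends, through $m_{\tau-1}$, on the quantity being bounded. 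I would resolve it without reintroducing $T$ by noting that the right side grows only polylogarithmically while the left grows linearly in $m_{\tau-1}$, so any $m_{\tau-1}$ above the claimed threshold falsifies the inequality; inverting this via the eventual monotonicity of $x\mapsto x/\log^2(\cdots)$ and the crude bound $\beta_\tau\le B\sqrt\lambda+\sigma\sqrt{d\log(\cdots)}$ is exactly what produces the explicit constant and the logarithmic argument $64 d(B\sqrt\lambda+\sigma)D/(\sqrt{\lambda\delta}(\Delta_l+\alpha r_l))$. Tracking these constants through the inversion, and the bookkeeping that decides whether the conservative-round term leaves the final denominator as $(\Delta_l+\alpha r_l)$, is where I expect the bulk of the effort; the conceptual heart is the signal lower bound of the second step.
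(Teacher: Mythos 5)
Your proposal is correct and follows essentially the same route as the paper's proof: isolate the last conservative round $\tau$, turn the failed safety test into a lower bound involving $\Delta_l+\alpha r_l$ per optimistic round and $\alpha r_l$ per conservative round via optimism and confidence widths, control the widths by Cauchy--Schwarz together with the elliptical-potential bound of Lemma~\ref{lem1}, and finally invert the resulting self-referential scalar inequality $\alpha r_l n_{\tau-1}\le -(m_{\tau-1}+1)(\Delta_l+\alpha r_l)+O\bigl(\beta_\tau\sqrt{(m_{\tau-1}+1)\,d\log(\cdot)}\bigr)$. Your minor variations --- bounding $\norm{z_{\tau-1}}_{V_{\tau-1}^{-1}}\le\sqrt{m_{\tau-1}}$ by the projection inequality $\Phi^\top(\lambda I+\Phi\Phi^\top)^{-1}\Phi\preceq I$ rather than the paper's triangle inequality with $V_t\preceq V_\tau$, performing the truncation at $1$ directly on true rewards (arguably cleaner than the paper's appeal to rewards in $[0,1]$), and replacing the paper's explicit maximization (Lemma~\ref{lem2}) by AM--GM plus a linear-versus-polylogarithmic growth argument --- change only constants, not the structure of the argument.
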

\begin{proof}
Let $\tau$ be the last round at which CLUCB plays conservatively (action suggested by the baseline policy), i.e.,~$\tau=\max\big\{1\leq t\leq T\mid a_t = b_t\big\}$. From Algorithm~\ref{alg1}, at round $\tau$, we may write

\vspace{-0.15in}
\begin{small}
\begin{equation*}
\min_{\theta\in\CC_\tau}~\Big \langle\theta,\phi_{a'_\tau}^\tau + \sum_{t\in S_{\tau -1}} \phi_{a_t}^t\Big\rangle + \sum_{t\in S^c_{\tau-1}}r^t_{b_t} < (1-\alpha)\sum_{t=1}^\tau r_{b_t}^t.
\end{equation*}
\end{small}
\vspace{-0.15in}

or equivalently, 

\vspace{-0.15in}
\begin{small}
\begin{equation}
\label{thm2eq13}
\alpha \sum_{t=1}^\tau r^t_{b_t} < \sum_{t\in S_{\tau}} r^t_{b_t} - \min_{\theta\in\CC_\tau}\;\Big\langle\theta,\phi_{a'_\tau}^\tau + \sum_{t\in S_{\tau -1}} \phi_{a_t}^t\Big\rangle.
\end{equation}
\end{small}
\vspace{-0.15in}

We may rewrite~\eqref{thm2eq13} as

\vspace{-0.15in}
\begin{small}
\begin{align}
\alpha \sum_{t=1}^\tau r^t_{b_t} &< \sum_{t\in S_{\tau-1}}\big[r^t_{b_t} - \langle \theta^*,\phi^t_{a_t}\rangle\big] + \big[r^\tau_{b_\tau} - \langle\theta^*,\phi_{a'_\tau}^\tau\rangle\big] \nonumber \\
&+ \Big\langle\theta^*,\phi_{a'_\tau}^\tau + \sum_{t\in S_{\tau -1}} \phi_{a_t}^t\Big\rangle - \min_{\theta\in\CC_\tau}~\Big \langle\theta,\phi_{a'_\tau}^\tau + \sum_{t\in S_{\tau -1}} \phi_{a_t}^t\Big\rangle \nonumber \\
&= \sum_{t\in S_{\tau-1}}\Big[r^t_{b_t} -\max_{\theta\in\CC_t}\;\langle\theta,\phi^t_{a_t}\rangle + \max_{\theta\in\CC_t}\;\langle\theta,\phi^t_{a_t}\rangle- \langle \theta^*,
\phi^t_{a_t}\rangle\Big] \nonumber \\
&+ \Big[r^\tau_{b_\tau} - \max_{\theta\in\CC_\tau}\;\langle\theta,\phi^\tau_{a'_\tau}\rangle + \max_{\theta\in\CC_\tau}\;\langle\theta,\phi^\tau_{a'_\tau}\rangle-\langle\theta^*,\phi_{a'_\tau}^\tau\rangle\Big] \nonumber \\
&+ \max_{\theta\in\CC_\tau}\;\Big\langle\theta^*-\theta,\phi_{a'_\tau}^\tau + \sum_{t\in S_{\tau -1}} \phi_{a_t}^t\Big\rangle.
\label{eq:temp0}
\end{align}
\end{small}
\vspace{-0.15in}

Note that for each $t\in S_{\tau-1}$, we have

\vspace{-0.15in}
\begin{small}
\begin{equation}
\label{eq:temp1}
r^t_{b_t} -\max_{\theta\in\CC_t}\;\langle\theta,\phi^t_{a_t}\rangle\leq r^t_{b_t} -\langle\theta^*,\phi^t_{a^*_t}\rangle = -\Delta^t_{b_t},
\end{equation}
\end{small}
\vspace{-0.175in}

and similarly, for the round $\tau$, we have

\vspace{-0.15in}
\begin{small}
\begin{equation}
\label{eq:temp2}
r^\tau_{b_\tau} -\max_{\theta\in\CC_\tau}\;\langle\theta,\phi^\tau_{a'_\tau}\rangle\leq r^\tau_{b_\tau} -\langle\theta^*,\phi^\tau_{a^*_\tau}\rangle = -\Delta^\tau_{b_\tau}.
\end{equation}
\end{small}
\vspace{-0.175in}

Using inequalities~\eqref{eq:temp0} to~\eqref{eq:temp2}, we may rewrite~\eqref{thm2eq13} as 

\vspace{-0.15in}
\begin{small}
\begin{align}
\alpha \sum_{t=1}^\tau r^t_{b_t} &< \sum_{t\in S_{\tau-1}} \Big[-\Delta^t_{b_t} + \max_{\theta\in\CC_t}\;\langle\theta-\theta^*,\phi^t_{a_t}\rangle\Big] \nonumber \\
&- \Delta^\tau_{b_\tau}+ \max_{\theta\in\CC_\tau}\;\big\langle\theta-\theta^*,\phi^\tau_{a'_\tau}\big\rangle \nonumber \\ 
&+ \max_{\theta\in\CC_\tau}\;\Big\langle\theta^*-\theta,\phi_{a'_\tau}^\tau + \sum_{t\in S_{\tau -1}} \phi_{a_t}^t\Big\rangle \nonumber \\
&\leq -(m_{\tau-1}+1) \Delta_l + \max_{\theta\in\CC_\tau}\;\big\langle\theta-\theta^*,\phi^\tau_{a'_\tau}\big\rangle \nonumber \\
&+ \sum_{t\in S_{\tau-1}} \max_{\theta\in\CC_t}\;\big\langle\theta-\theta^*,\phi^t_{a_t}\big\rangle \nonumber \\
&+ \max_{\theta\in\CC_\tau}\;\Big\langle\theta^*-\theta,\phi_{a'_\tau}^\tau + \sum_{t\in S_{\tau -1}} \phi_{a_t}^t\Big\rangle \label{thm5eqmid} \\
&\leq -(m_{\tau-1}+1)\Delta_l  + 2\beta_\tau \norm{\phi^\tau_{a'_\tau}}_{V_\tau^{-1}} \nonumber \\
&+ 2\sum_{t\in S_{\tau-1}} \beta_t \norm{\phi^t_{a_t}}_{V_t^{-1}} + 2\beta_\tau \norm{\phi_{a'_\tau}^\tau + \sum_{t\in S_{\tau -1}} \phi_{a_t}^t}_{V_\tau^{-1}} \nonumber \\
&\leq -(m_{\tau-1}+1) \Delta_l+ 4\beta_\tau \norm{\phi^\tau_{a'_\tau}}_{V_\tau^{-1}} \nonumber \\
&+ 4\beta_\tau\sum_{t\in S_{\tau-1}}\norm{\phi^t_{a_t}}_{V_t^{-1}}.
\label{thm2eq3}
\end{align}
\end{small}
\vspace{-0.15in}

On the other hand, it follows from~\eqref{thm5eqmid} and the fact that all rewards are in $[0,1]$ that 

\vspace{-0.15in}
\begin{small}
\begin{equation}
\label{thm5eqextra2}
\alpha \sum_{t=1}^\tau r^t_{b_t}\leq -(m_{\tau-1}+1)\Delta_l +4(m_{\tau-1}+1).
\end{equation}
\end{small}
\vspace{-0.15in}

Combining~\eqref{thm2eq3} and~\eqref{thm5eqextra2}, we may write

\vspace{-0.15in}
\begin{small}
\begin{align}
\alpha \sum_{t=1}^\tau r^t_{b_t} &\leq -(m_{\tau-1}+1) \Delta_l + 4\beta_\tau \Big[\min\Big(\norm{\phi^\tau_{a'_\tau}}_{V_\tau^{-1}},1\Big) \nonumber \\
&+ \sum_{t\in S_{\tau-1}}\min\Big(\norm{\phi^t_{a_t}}_{V_t^{-1}},1\Big)\Big].
\label{thm5eqcomb}
\end{align}
\end{small}
\vspace{-0.15in}

In order to write the next equation more compactly, let us define $\Gamma$ as

\vspace{-0.15in}
\begin{small}
\begin{equation*}
\Gamma = \Big[\min\Big(\norm{\phi^\tau_{a'_\tau}}_{V_\tau^{-1}}^2,1\Big)+ \sum_{t\in S_{\tau-1}}\min\Big(\norm{\phi^t_{a_t}}_{V_t^{-1}}^2,1\Big)\Big].
\end{equation*}
\end{small}
\vspace{-0.15in}

From Cauchy-Schwarz inequality and Lemma~\ref{lem1}, we have

\vspace{-0.15in}
\begin{small}
\begin{align}
\alpha \sum_{t=1}^\tau r^t_{b_t} &\leq  -(m_{\tau-1}+1)\Delta_l +4\beta_\tau \sqrt{(m_{\tau-1}+1) \times \Gamma} \nonumber \\
&\leq -(m_{\tau-1}+1) \Delta_l + 4\beta_\tau \sqrt{2(m_{\tau-1}+1) d \log\left(1 + \frac{m_\tau D^2}{\lambda d}\right)} \nonumber \\
&= -(m_{\tau-1}+1) \Delta_l \nonumber \\
&+ 8\sqrt{(m_{\tau-1}+1) d\log\left(1 + \frac{(m_{\tau-1}+1) D^2}{\lambda d}\right)} \nonumber \\
&\times \left[\sqrt{\lambda}B + \sigma\sqrt{d\log\left(\frac{\lambda+(m_{\tau-1}+1) D^2}{\lambda\delta}\right)}\right] \nonumber \\
&\leq  -(m_{\tau-1}+1)\Delta_l \nonumber \\
&+ 8d(B\sqrt{\lambda }+\sigma)\log\left(\frac{2(m_{\tau-1}+1) D^2}{\lambda\delta}\right)\sqrt{(m_{\tau-1}+1)}, \nonumber
\end{align}
\end{small}
\vspace{-0.15in}

where the last inequality follows from the fact that $\lambda\leq D^2$. On the other hand, since $\forall t:r^t_{b_t}\geq r_l$ and $\tau = n_{\tau-1} +m_{\tau-1}+1$, we may write

\vspace{-0.15in}
\begin{small}
\begin{align}
\label{thm5eq6}
&\alpha r_l n_{\tau-1} \leq -(m_{\tau-1}+1)(\Delta_l+\alpha r_l) \\ 
&\;\;+ 8d(B\sqrt{\lambda }+\sigma)\log\left(\frac{2(m_{\tau-1}+1) D^2}{\lambda\delta}\right)\sqrt{(m_{\tau-1}+1)}. \nonumber
\end{align}
\end{small}
\vspace{-0.15in}

The RHS of~\eqref{thm5eq6} is only positive for a finite range of $m_\tau$, and thus, has a finite upper-bound. For $m = (m_{\tau-1}+1), c_1 = 8d(B\sqrt{\lambda}+\sigma),c_2 = \frac{2D^2}{\lambda\delta}$ and $c_3 = (\Delta_l+\alpha r_l)$, Lemma~\ref{lem2} reported in Appendix~\ref{thm2proofTech} provides the following upper-bound on the RHS (and thus for the LHS) of~\eqref{thm5eq6}:

\vspace{-0.15in}
\begin{small}
\begin{align*}
\alpha r_l n_{\tau-1} &\leq 114 d^2\frac{(B\sqrt{\lambda }+\sigma)^2}{\Delta_l+\alpha r_l} \left[\log\left(\frac{64d(B\sqrt{\lambda }+\sigma)D}{\sqrt{\lambda\delta}(\Delta_l+\alpha r_l)}\right)\right]^2.
\end{align*}
\end{small}
\vspace{-0.15in}

The result follows from $n_T = n_\tau = n_{\tau-1} + 1$.
\end{proof}

We now have all the necessary ingredients to derive a regret bound on the performance of the CLUCB algorithm. We report the regret bound of CLUCB in Theorem~\ref{thm3}, whose proof is a direct consequence of the results of Propositions~\ref{prop1} and~\ref{thm1}, and Theorem~\ref{thm2}.
\begin{theorem}
\label{thm3}
With probability at least $1-\delta$, the CLUCB algorithm satisfies the performance constraint~\eqref{constraint1} for all $t\in\N$, and has the following regret bound:

\vspace{-0.15in}
\begin{small}
\begin{equation}
\label{regbound1}
R_T(\text{CLUCB}) = O\left(d\log\big(\frac{DT}{\lambda\delta}\big)\sqrt{T} + \frac{K\Delta_h}{\alpha r_l(\alpha r_l + \Delta_l)}\right),
\end{equation}
\end{small}
\vspace{-0.15in}

where $K$ is a constant that depends only on the parameters of the problem as

\vspace{-0.15in}
\begin{small}
\begin{align}
\label{K1}
K = 1 &+ 114 d^2\frac{(B\sqrt{\lambda }+\sigma)^2}{\Delta_l+\alpha r_l} 
\left[\log\left(\frac{64d(B\sqrt{\lambda }+\sigma)D}{\sqrt{\lambda\delta}(\Delta_l+\alpha r_l)}\right)\right]^2. \nonumber
\end{align}
\end{small}
\vspace{-0.15in}

\end{theorem}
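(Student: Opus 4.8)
The statement packages two claims — feasibility of the safety constraint and the regret bound — and the plan is to establish both on the single event $\mathcal{E}=\{\theta^*\in\CC_t,\ \forall t\in\N\}$, which by Proposition~\ref{prop0} has $\P[\mathcal{E}]\geq 1-\delta$. Since both conclusions are derived on the same event, no union bound is needed and the theorem holds with probability at least $1-\delta$. I would first dispose of the constraint and then assemble the regret bound from the three preceding results, inheriting the standing assumption $\lambda\leq D^2$ from Theorem~\ref{thm2}.

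For constraint satisfaction I would reuse the remark made just after Proposition~\ref{prop0}. At every round $t$ the algorithm plays the optimistic action only when the certified test $L_t+\sum_{i\in S^c_{t-1}}r^i_{b_i}\geq(1-\alpha)\sum_{i=1}^t r^i_{b_i}$ holds, i.e. when~\eqref{constraint1} is guaranteed for the worst $\theta\in\CC_t$; otherwise it plays $b_t$, whose contribution to both sides of~\eqref{constraint1} is the known baseline reward. On $\mathcal{E}$ we have $\theta^*\in\CC_t$, so the worst-case guarantee implies the true inequality~\eqref{constraint1} at round $t$, uniformly in $t$.

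For the regret bound I would begin from the decomposition of Proposition~\ref{prop1}, namely $R_T(\text{CLUCB})\leq R_{S_T}(\text{LUCB})+n_T\Delta_h$, and bound the two pieces separately on $\mathcal{E}$. Proposition~\ref{thm1} controls the first piece, contributing the time-growing term $O\big(d\log(\frac{DT}{\lambda\delta})\sqrt{T}\big)$. Theorem~\ref{thm2} controls the second: multiplying its bound on $n_T$ by $\Delta_h$ produces a $T$-independent term, which after collecting factors is of the stated order $\frac{K\Delta_h}{\alpha r_l(\alpha r_l+\Delta_l)}$ with $K$ as in~\eqref{K1}. Summing the two bounds yields~\eqref{regbound1}.

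The argument is therefore a synthesis rather than a new derivation, and I do not expect any genuine analytical obstacle; the only delicate part is constant bookkeeping. Concretely, one must pass carefully from the inequality $\alpha r_l n_{\tau-1}\leq\ldots$ at the end of the proof of Theorem~\ref{thm2} to an explicit bound on $n_T$ (dividing by $\alpha r_l$ and using $n_T=n_{\tau-1}+1$), and then fold the extra $\Delta_h$ together with the $\tfrac{1}{\alpha r_l}$ and $\tfrac{1}{\alpha r_l+\Delta_l}$ factors into the constant $K$ and the prefactor appearing in~\eqref{regbound1}, taking care that the resulting big-$O$ expression matches the claimed form.
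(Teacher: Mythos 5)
Your proposal is correct and follows essentially the same route as the paper: the paper proves Theorem~\ref{thm3} exactly as a direct combination, on the single event $\mathcal{E}$ of Proposition~\ref{prop0}, of the constraint-satisfaction argument stated after Proposition~\ref{prop0}, the decomposition of Proposition~\ref{prop1}, the LUCB bound of Proposition~\ref{thm1}, and the bound on $n_T$ from Theorem~\ref{thm2}. Your added care about the bookkeeping (dividing the final inequality $\alpha r_l n_{\tau-1}\leq\cdots$ by $\alpha r_l$, using $n_T=n_{\tau-1}+1$, and inheriting the assumption $\lambda\leq D^2$) is consistent with, and slightly more explicit than, what the paper does.
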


\paragraph{Remark~2.} The first term in the regret bound~\eqref{regbound1} is the regret of LUCB, which grows at the rate $\sqrt{T}\log(T)$. The second term accounts for the loss incurred by being conservative in order to satisfy the performance constraint~\eqref{constraint1}. Our results indicate that this loss does not grow with time (since CLUCB will be conservative only in a finite number of times). This improves over the regret bound derived in~\cite{wu2016conservative} for the MAB setting, where the regret of being conservative grows with time. 
Furthermore, the regret bound of Theorem~\ref{thm3} clearly indicates that CLUCB's regret is larger for smaller values of $\alpha$. This perfectly matches the intuition that the agent must be more conservative, and thus, suffers higher regret for smaller values of $\alpha$. Theorem~\ref{thm3} also indicates that CLUCB's regret is smaller for smaller values of $\Delta_h$, because when the baseline policy $\pi_b$ is close to optimal, the algorithm does not lose much by being conservative.


\section{Unknown Baseline Reward}
\label{sec:CLUCB2}

In this section, we consider the case where the expected rewards of the actions taken by the baseline policy, $r^t_{b_t}$, are {\em unknown} at the beginning. We show how the CLUCB algorithm presented in Section~\ref{sec:OFUCL} should be changed to handle this case, and present a new algorithm, called CLUCB2. We prove a regret bound for CLUCB2, which is at the same rate as that for CLUCB. This shows that the lack of knowledge about the reward function of the baseline policy does not hurt our algorithm in terms of the rate of the regret. 


Algorithm~\ref{alg2} contains the pseudocode of CLUCB2. The main difference with CLUCB is in the condition that should be checked at each round $t$ to see whether we should play the optimistic action $a'_t$ or the conservative action $b_t$. This condition should be selected in a way that CLUCB2 satisfies the constraint~\eqref{constraint1}. We may rewrite~\eqref{constraint1} as
\begin{equation}
\label{eq:CLUCB2-1}
\sum_{i \in S_{t-1}} r^i_{a_i} + r^t_{a'_t} + \alpha \sum_{i\in S^c_{t-1}} r^i_{b_i} \geq (1-\alpha)\big(\phi^t_{b_t}+\sum_{i \in S_{t-1}} r^i_{b_i}\big).
\end{equation}
If we lower-bound the LHS and upper-bound the RHS of~\eqref{eq:CLUCB2-1}, we obtain  
\begin{align}
\label{eq:CLUCB2-2}
\min_{\theta\in\CC_t}\;\langle\theta,\sum_{i \in S_{t-1}}\phi^i_{a_i} &+ \phi_{a'_t}^t\rangle + \alpha\min_{\theta\in\CC_t}\;\langle\theta,\sum_{i\in S^c_{t-1}}\phi_{b_i}^i\rangle \\ 
&\geq(1-\alpha)\max_{\theta\in\CC_t}\;\langle\theta,\sum_{i \in S_{t-1}} \phi^i_{b_i}+ \phi^t_{b_t}\rangle.\nonumber
\end{align}
Since each confidence set $\CC_t$ is built in a way to contain the true parameter $\theta^*$ with high probability, it is easy to see that~\eqref{eq:CLUCB2-1} is satisfied whenever~\eqref{eq:CLUCB2-2} is true. 

\begin{algorithm}[tb]
   \caption{CLUCB2}
   \label{alg2}
\begin{algorithmic}
   \STATE {\bfseries Input:} $\alpha,r_l,\BB,\FF$
   \STATE{\bfseries Initialize:} $n\leftarrow 0,\;z \leftarrow \mathbf{0},w \leftarrow \mathbf{0},v \leftarrow \mathbf{0}\;$ and $\;\CC_1 \leftarrow \BB$
   \FOR{$t = 1,2,3,\cdots$ }
   \STATE Let $b_t$ be the action suggested by $\pi_b$ at round $t$
   \STATE Find $(a'_t,\widetilde{\theta}) = \arg\max_{(a,\theta)\in\AA_t\times\CC_t}\;\langle \theta,\phi_a^t\rangle$
   \STATE Find $R_t = \max_{\theta\in\CC_t} \;\langle\theta,v + \phi^t_{b_t}\rangle\;$ and
   \vspace{-0.075in}
   \begin{equation*}
   L_t = \min_{\theta\in\CC_t}\langle\theta,z + \phi_{a'_t}^t\rangle +\alpha\max\big\{\min_{\theta\in\CC_t} \;\langle\theta,w\rangle,n r_l\big\}
   \end{equation*}
    \vspace{-0.175in}
     \IF{$L_t\geq (1-\alpha)R_t$}
  		\STATE Play $a_t = a'_t$ and observe  $y_t$ defined by~\eqref{data1}
  		\STATE Set $\;\;z\leftarrow z + \phi_{a'_t}^t\;\;$ and $\;\;v\leftarrow v + \phi^t_{b_t}$
     \ELSE
  		\STATE Play $a_t = b_t$ and observe  $y_t$ defined by~\eqref{data1}
  		\STATE Set $\;\;w = w + \phi^t_{b_t}\;\;$ and $\;\;n\leftarrow n+1$
     \ENDIF
     \STATE Given $a_t$ and $y_t$, construct the confidence set $\CC_{t+1}$ according to~\eqref{confset2} 
  \ENDFOR   
\end{algorithmic}
\end{algorithm} 



CLUCB2 uses both optimistic and conservative actions, and their corresponding rewards in building its confidence sets. Specifically for any $t$, we let $\Phi_t = [\phi^{1}_{a_{1}}, \phi^{2}_{a_{2}},\cdots,\phi^{t}_{a_{t}}]$, $Y_t = [y_{1},y_{2},\cdots,y_{t}]^\intercal$, $V_t = \lambda I+\Phi_t^\intercal\Phi_t$, and define the least-square estimate after round $t$ as 
\begin{equation}
\label{thetahat2}
\widehat\theta _t = \left(\Phi_t^\intercal\Phi_t + \lambda I\right)^{-1}\Phi_t^\intercal Y_t.
\end{equation}
Given $ V_t$ and $\widehat\theta_t$, the confidence set for round $t+1$ is constructed as 
\begin{equation}
\label{confset2}
\CC_{t+1} = \left\{\theta\in \CC_t:\|\theta-\widehat\theta_t\|_{V_t}\leq \beta_{t+1}\right\},
\end{equation}
where $\CC_1 = \BB$ and $\beta_t = \sigma\sqrt{d\log\left(\frac{1+tD^2/\lambda}{\delta}\right)}+B\sqrt{\lambda}$. Similar to Proposition~\ref{prop0}, we can easily prove that the confidence sets built by~\eqref{confset2} contain the true parameter $\theta^*$ with high probability, i.e.,~$\P\big[\theta^*\in\CC_t,~\forall t\in\N\big] \geq 1- \delta$.

%


\paragraph{Remark~3.} Note that unlike the CLUCB algorithm, here we build nested confidence sets, i.e.,~$\cdots\subseteq\CC_{t+1}\subseteq\CC_t\subseteq\CC_{t-1}\subseteq\cdots$, which is necessary for the proof of the algorithm. Potentially, this can increase the computational complexity of  CLUCB2, but from a practical point of view, the confidence sets become nested automatically after sufficient data has been observed. Therefore, the nested constraint in  building the confidence sets can be relaxed at sufficiently large rounds.


The following theorem guarantees that CLUCB2 satisfies the safety constraint~\eqref{constraint1} with high probability, while its regret has the same rate as that of CLUCB and is worse than that of LUCB only up to an additive constant. 

\begin{theorem}
\label{thm4}
With probability at least $1-\delta$, the CLUCB2 algorithm satisfies the performance constraint~\eqref{constraint1} for all $t\in \N$, and has the following regret bound
\begin{equation}
\label{regclucb2}
R_T(\text{CLUCB2}) = O\left(d\log\left(\frac{DT}{\lambda\delta}\right)\sqrt{T} + \frac{K\Delta_h}{\alpha^2 r_l^2}\right),
\end{equation}
where $K$ is a constant that depends only on the parameters of the problem as
$$K = 256 d^2(B\sqrt{\lambda}+\sigma)^2\left[\log\left(\frac{10d(B\sqrt{\lambda}+\sigma)\sqrt{D}}{\alpha r_l (\lambda\delta)^{1/4}}\right)\right]^2+1.$$
\end{theorem}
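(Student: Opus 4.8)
# Proof Proposal for Theorem~\ref{thm4}

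The plan is to mirror the structure of the CLUCB analysis in Section~\ref{subsec:analysis}, but with the modifications forced by the unknown baseline reward. The argument splits into two independent parts: the safety guarantee and the regret bound. For safety, I would first establish that the confidence sets built by~\eqref{confset2} satisfy $\P[\theta^*\in\CC_t,\forall t]\geq 1-\delta$ (this is the analogue of Proposition~\ref{prop0}, already asserted in the text). On the event $\mathcal{E}$ that $\theta^*\in\CC_t$ for all $t$, the key observation is that~\eqref{eq:CLUCB2-2} was derived precisely by lower-bounding the LHS and upper-bounding the RHS of the exact rewritten constraint~\eqref{eq:CLUCB2-1}. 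Since~\eqref{eq:CLUCB2-1} is algebraically equivalent to~\eqref{constraint1}, and since the algorithm only plays the optimistic action $a'_t$ when the condition $L_t\geq(1-\alpha)R_t$ holds (which is exactly~\eqref{eq:CLUCB2-2}), the constraint holds deterministically on $\mathcal{E}$. The one subtlety I would check carefully is the role of the $\max\{\min_{\theta}\langle\theta,w\rangle,\,nr_l\}$ term: because every baseline reward satisfies $r^i_{b_i}\geq r_l$ by Assumption~\ref{assump3}, the quantity $nr_l$ is always a valid lower bound on $\sum_{i\in S^c_{t-1}}r^i_{b_i}$, so replacing the confidence-set lower bound by the larger of the two only strengthens the LHS and preserves safety. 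This is the feature that later prevents the conservative-play count from blowing up.

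For the regret bound, I would again invoke Proposition~\ref{prop1}: the decomposition $R_T(\text{CLUCB2})\leq R_{S_T}(\text{LUCB})+n_T\Delta_h$ holds verbatim, since it uses nothing about how the baseline reward is estimated. The $\sqrt{T}\log T$ term is controlled by Proposition~\ref{thm1} exactly as before (the confidence-set radius $\beta_t$ has the same form, and now uses all $t$ rounds of data rather than only $m_t$, which only helps). So the entire burden is to produce a constant upper bound on $n_T$, the number of conservative rounds. I would let $\tau$ be the last conservative round and write out the failure of the play condition at $\tau$, namely $L_\tau<(1-\alpha)R_\tau$. Expanding $L_\tau$ and $R_\tau$ through their definitions and inserting $\theta^*$ inside each $\min$/$\max$ (legitimate on $\mathcal{E}$), I would convert each confidence-set gap into a term of the form $2\beta_t\norm{\phi}_{V_{t-1}^{-1}}$ via the Cauchy--Schwarz bound $\langle\theta-\theta^*,\phi\rangle\leq\norm{\theta-\theta^*}_{V}\norm{\phi}_{V^{-1}}\leq 2\beta\norm{\phi}_{V^{-1}}$, collect them, and apply Lemma~\ref{lem1} together with the Cauchy--Schwarz aggregation $\sum\norm{\phi_t}_{V_t^{-1}}\leq\sqrt{\tau\cdot\sum\min(1,\norm{\phi_t}^2_{V_t^{-1}})}$ to obtain a bound of order $\beta_\tau\sqrt{\tau\,d\log(\cdots)}$ on the accumulated uncertainty.

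The crucial algebraic difference from Theorem~\ref{thm2} is where the negative ``drift'' term comes from. In CLUCB the term $-(m_{\tau-1}+1)\Delta_l$ and the explicit factor $\alpha\sum r^t_{b_t}\geq \alpha r_l\tau$ supplied the linear-in-$\tau$ negative term that dominates the $\sqrt{\tau}$ uncertainty growth. Here, because the baseline reward is unknown, I expect the guaranteed surplus to come instead from the $\alpha\,nr_l$ term in $L_t$ combined with the $\alpha$-weighting on both sides of~\eqref{eq:CLUCB2-2}; carefully tracking the $\alpha$ factors is what produces the $\alpha^2 r_l^2$ denominator in~\eqref{regclucb2} (rather than the $\alpha r_l(\alpha r_l+\Delta_l)$ of Theorem~\ref{thm3}), and explains why $\Delta_l$ no longer appears. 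Concretely, rearranging the failed condition should yield an inequality of the shape $c\,\alpha r_l\,\tau\lesssim \beta_\tau\sqrt{\tau\,d\log\tau}$ for an absolute constant $c$; since the RHS grows only like $\sqrt{\tau}\,\mathrm{polylog}(\tau)$, this forces $\tau$ — and hence $n_T$ — to be bounded by a constant. I would finish by applying the technical Lemma~\ref{lem2} (from the appendix, used already in Theorem~\ref{thm2}) with the appropriate substitutions $c_1\sim d(B\sqrt{\lambda}+\sigma)$, $c_3\sim\alpha r_l$ to solve the transcendental inequality and extract the stated constant $K=256\,d^2(B\sqrt{\lambda}+\sigma)^2[\log(\cdots)]^2+1$.

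I expect the main obstacle to be the bookkeeping in the second paragraph's expansion: when the baseline reward is unknown, the RHS term $R_\tau=\max_\theta\langle\theta,v+\phi^\tau_{b_\tau}\rangle$ introduces an \emph{upper} confidence bound on a sum of baseline features, so plugging in $\theta^*$ now produces an uncertainty term on the baseline actions in $S_{\tau-1}$ as well as on the optimistic actions. Ensuring that these extra baseline-uncertainty terms are also absorbed into the $\sum 2\beta_t\norm{\phi}_{V_t^{-1}}$ sum — and that the nested structure $\CC_{t+1}\subseteq\CC_t$ (emphasized in Remark~3) is what legitimizes bounding $\norm{\theta-\theta^*}_{V_t}\leq 2\beta_t$ uniformly over the relevant $\theta$ — is the delicate step. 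Getting the constant factors and the $\alpha$-dependence exactly right, so that the final inequality collapses to the clean $\alpha^2 r_l^2$ form, is where I anticipate the bulk of the careful work.
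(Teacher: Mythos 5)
Your overall architecture (safety on the event that the confidence sets hold, the decomposition $R_T(\text{CLUCB2}) \leq R_{S_T}(\text{LUCB}) + n_T\Delta_h$, bounding $n_T$ through the failed play condition at the last conservative round $\tau$, and closing with an appendix lemma) matches the paper, and your safety argument --- including the observation that $n r_l$ is a valid lower bound on $\sum_{i\in S^c_{t-1}}r^i_{b_i}$ --- is correct. The genuine gap is in your central step: you propose to handle $R_\tau=\max_{\theta\in\CC_\tau}\big\langle\theta,\sum_{i\in S_{\tau-1}\cup\{\tau\}}\phi^i_{b_i}\big\rangle$ by inserting $\theta^*$ and absorbing the resulting baseline-uncertainty terms $\langle\theta-\theta^*,\phi^i_{b_i}\rangle$ into the elliptical-potential sum $\sum 2\beta_t\norm{\phi}_{V_t^{-1}}$ controlled by Lemma~\ref{lem1}. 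This step fails: for $i\in S_{\tau-1}$ the baseline action $b_i$ was \emph{not played}, so $\phi^i_{b_i}$ never enters the design matrix; its norm $\norm{\phi^i_{b_i}}_{V^{-1}}$ can remain as large as $D/\sqrt{\lambda}$ at every such round (e.g., if baseline features are orthogonal to everything played), so these terms sum to order $m_{\tau-1}$, which dominates the negative drift $-\alpha r_l(m_{\tau-1}+1)$ and yields no time-independent bound on $n_T$ at all. The paper's proof never bounds this uncertainty. Instead it eliminates the unplayed baseline features via the optimism domination $\max_{\theta\in\CC_i}\langle\theta,\phi^i_{b_i}\rangle \leq \max_{\theta\in\CC_i}\langle\theta,\phi^i_{a'_i}\rangle$ (valid by definition of $a'_i$, with $a_i=a'_i$ for $i\in S_{\tau-1}$), which replaces each baseline feature by a \emph{played} feature before any Cauchy--Schwarz step. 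This is also where nestedness is truly needed: one must first pass from $\max_{\theta\in\CC_\tau}$ to $\max_{\theta\in\CC_i}$ (legitimate since $\CC_\tau\subseteq\CC_i$) so that optimism at round $i$ can be invoked --- not, as you suggest, to justify the bound $\norm{\theta-\widehat\theta_{i-1}}_{V_{i-1}}\leq 2\beta_i$ uniformly.

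Two smaller gaps. First, even for the played actions, Lemma~\ref{lem1} does not apply verbatim: in CLUCB2 the matrix $V_{i-1}$ also contains features from conservative rounds, while your potential sum runs only over $i\in S_{\tau-1}\cup\{\tau\}$; the paper fixes this by introducing $\widetilde V_i = \lambda I + \sum_{j\in S_i}\phi^j_{a_j}\big(\phi^j_{a_j}\big)^\top$ and using $\norm{\phi}_{V_{i-1}^{-1}}\leq\norm{\phi}_{\widetilde V_{i-1}^{-1}}$. Second, $\beta_\tau$ depends on $\tau = m_{\tau-1}+n_{\tau-1}+1$, so $n_{\tau-1}$ reappears inside the logarithm; after applying Lemma~\ref{lem2} in $m$, the paper is left with a transcendental inequality of the form $\sqrt{n_{\tau-1}}\leq c_1\log(c_2\sqrt{n_{\tau-1}})$ and needs the separate Lemma~\ref{lem3} to solve it, whereas your plan cites only Lemma~\ref{lem2}. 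Your alternative of collapsing everything into the single variable $\tau$ and solving $\alpha r_l\tau\lesssim c_1\sqrt{\tau}\log(c_2\tau)$ could be made to work (e.g., by splitting $\alpha r_l\tau$ in half and applying Lemma~\ref{lem2} with $c_3=\alpha r_l/2$), and would give the same $\alpha^2 r_l^2$ dependence, but this needs to be spelled out rather than attributed to Lemma~\ref{lem2} as stated.
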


We report the proof of Theorem~\ref{thm4} in Appendix~\ref{thm4proofsec}. The proof follows the same steps as that of Theorem~\ref{thm3}, with additional non-trivial technicalities that have been highlighted there.



\section{Simulation Results}
\label{sec:simulations}

\begin{figure}[t]
\centering
  \includegraphics[width=0.7\linewidth]{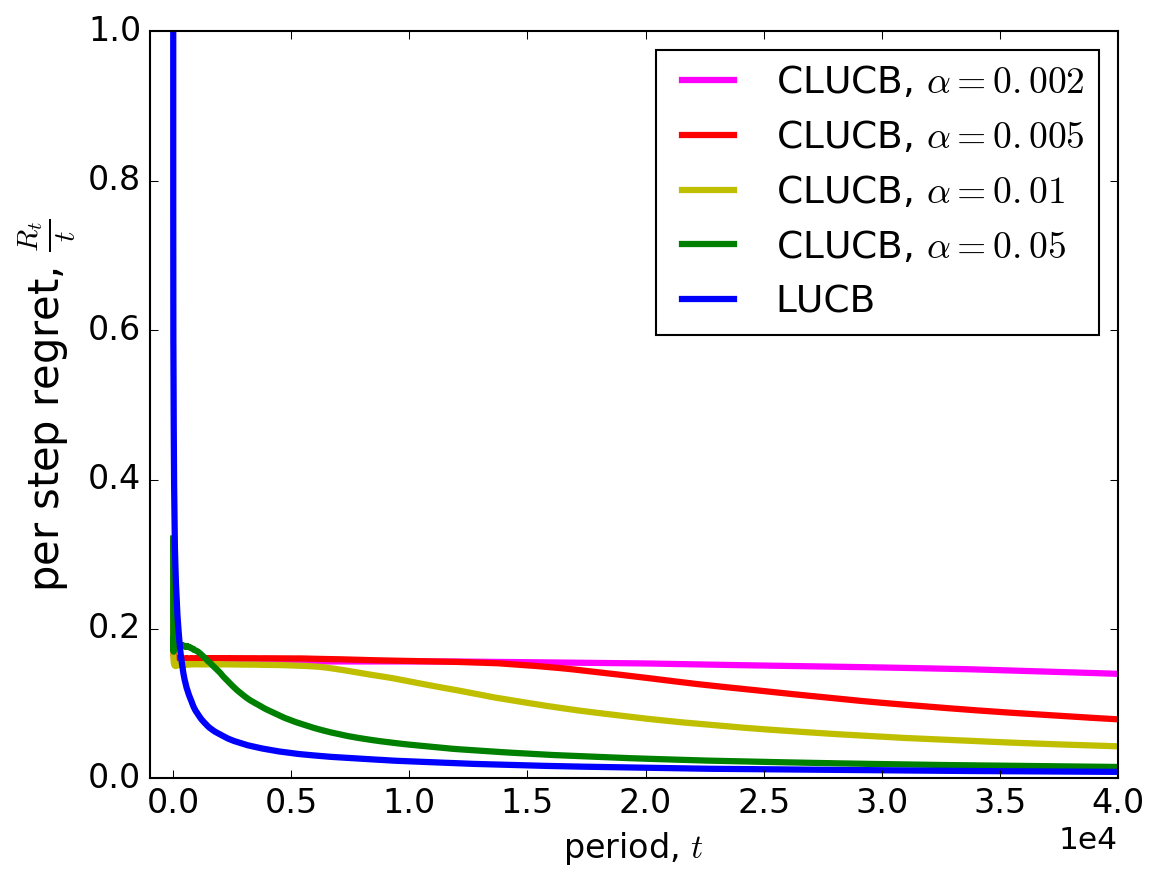}
\caption{Average (over $1,000$ runs) per-step regret of LUCB and CLUCB for different values of $\alpha$.}
\label{fig1}
\end{figure}

In this section, we provide simulation results to illustrate the performance of the proposed CLUCB algorithm. We considered a time independent action set of 100 arms each having a time independent feature vector living in the $\R^{4}$ space. These feature vectors and the parameter $\theta^*$ are randomly drawn from $\mathcal{N}\big(0,I_{4}\big)$ such that the mean reward associated to each arm is positive. The observation noise at each time step is also generated independently from $\mathcal{N}(0,1)$, and the mean reward of the baseline policy at any time is taken to be the reward associated to the $10$'th best action. We have taken $\lambda=1,\delta=0.001$ and the results are averaged over 1,000 realizations. 

\begin{figure}[t]
\centering
  \includegraphics[width=0.7\linewidth]{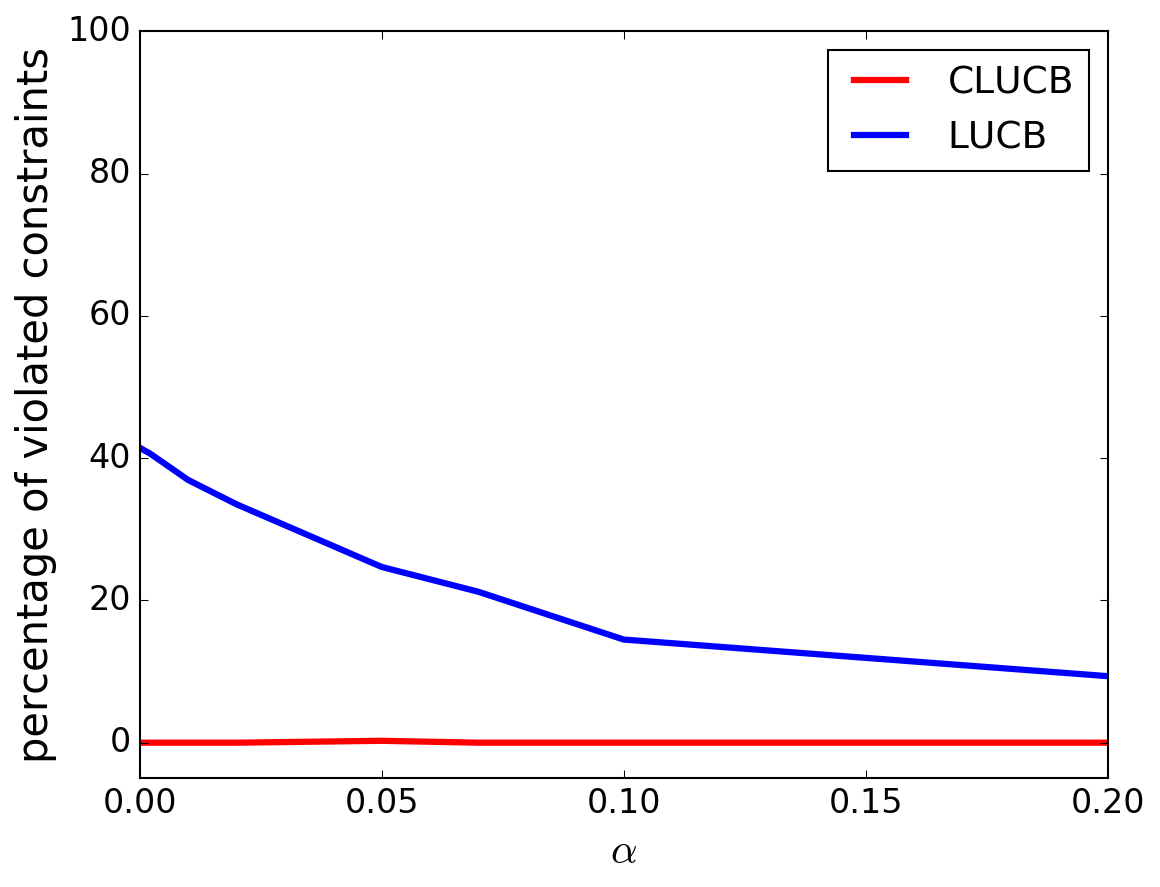}
\caption{Percentage of the rounds, in the first $1,000$ rounds, at which the safety constraint is violated by LUCB and CLUCB for different values of $\alpha$.}
\label{fig2}
\end{figure}

In Figure~\ref{fig1}, we plot the per-step regret (i.e.,~$\frac{R_t}{t}$) of LUCB and CLUCB for different values of $\alpha$ over a horizon $T = 40,000$. Figure~\ref{fig1} shows that the per-step regret of CLUCB remains constant at the beginning (the conservative phase), because during this phase, CLUCB follows the baseline policy to make sure that the performance constraint~\eqref{constraint1} is satisfied. As expected, the length of the conservative phase decreases as $\alpha$ is increased, since the performance constraint is relaxed for larger values of $\alpha$, and hence, CLUCB starts playing optimistic actions more quickly. After this initial conservative phase, CLUCB has learned enough about the optimal action and its performance starts converging to that of LUCB. On the other hand, Figure~\ref{fig1} shows that the per-step regret of CLUCB at the first few periods remains much lower than that of LUCB. This is because LUCB plays agnostic to the safety constraint, and thus, may select very poor actions in its initial exploration phase. In regard to this, Figure~\ref{fig2} plots the percentage of the rounds, in the first $1,000$ rounds, at which the safety constraint~\eqref{constraint1} is violated by LUCB and CLUCB for different values of $\alpha$. According to this figure, CLUCB always satisfies the performance constraint for all the values of $\alpha$, while LUCB fails in a significant number of rounds, specially for small values of $\alpha$ (i.e.,~tight constraint). 

\begin{figure}[t]
\centering
  \includegraphics[width=0.7\linewidth]{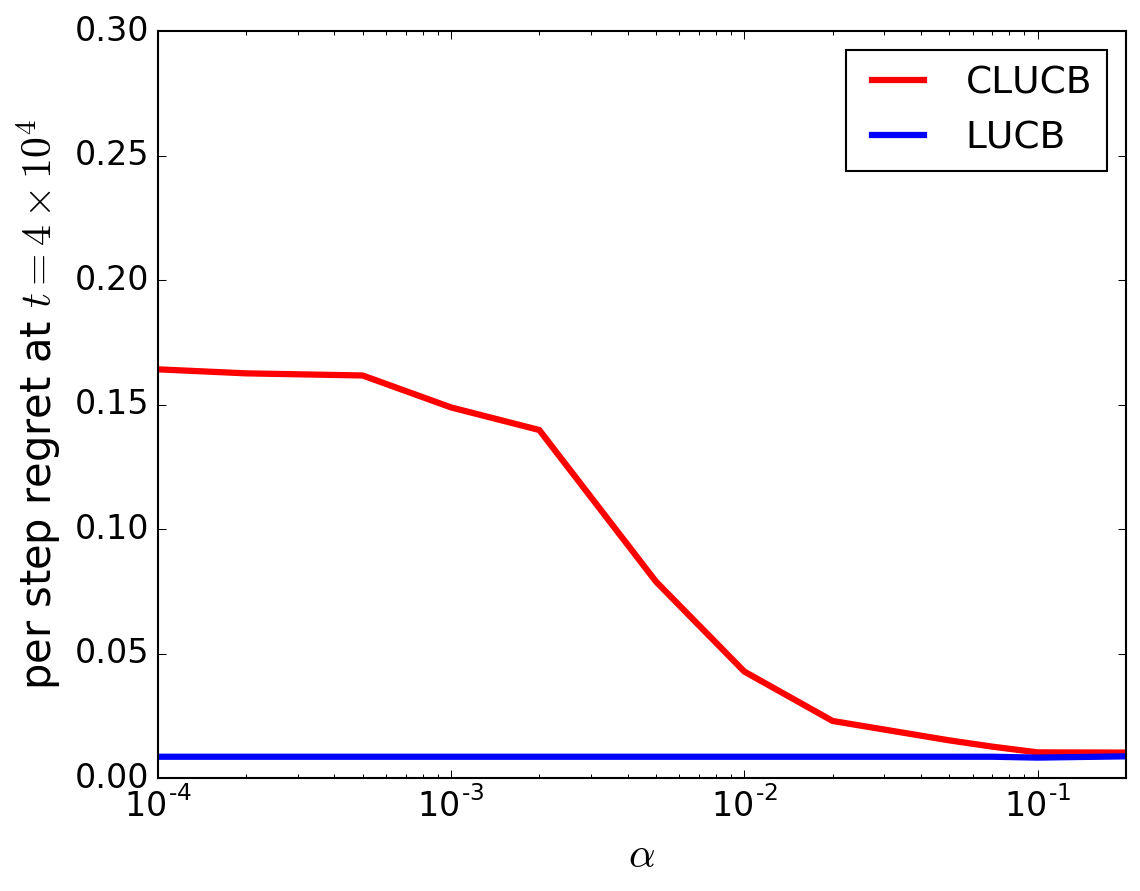}
\caption{Per-step regret of LUCB and CLUCB for different values of $\alpha$, at round $t = 40,000$.}
\label{fig3}
\end{figure}

To better see the effect of the safety constraint on the regret of the algorithms, Figure~\ref{fig3} plots the per-step regret achieved by CLUCB at round $t = 40,000$ for different values of $\alpha$, as well as that for LCUB. As expected from our analysis and is shown in Figure~\ref{fig1}, the performance of CLUCB converges to that of LUCB after an initial conservative phase. Figure~\ref{fig3} confirms that such convergence happens more quickly for larger values of $\alpha$, where the safety constraint is relaxed.

\section{Conclusion}

In this paper, we studied the concept of {\em safety} in contextual linear bandits to address the challenges that arise in implementing such algorithms in practical situations such as ad recommendation systems. Most of the existing linear bandit algorithms, such as LUCB~\cite{abbasi2011improved}, suffer from a large regret at their initial exploratory rounds. This unsafe behavior is not acceptable in many practical situations, where having a reasonable performance at any time is necessary for a learning algorithm to be considered reliable and to remain in production. 

To guarantee safe learning, we formulated a conservative linear bandit problem, where the performance of the learning algorithm (measured in terms of its cumulative rewards) at any time is constrained to be at least as good as a fraction of the performance of a baseline policy. We proposed a conservative version of the LUCB algorithm, called CLUCB, to solve this constraint problem, and showed that it satisfies the safety constraint with high probability, while achieving a regret bound equivalent to that of LUCB up to an additive time-independent constant. We designed two versions of CLUCB that can be used depending on whether the reward function of the baseline policy is known or unknown, and showed that in each case, CLUCB acts conservatively (i.e.,~plays the action suggested by the baseline policy) only at a finite number of rounds, which depends on how suboptimal the baseline policy is. We reported simulation results that support our analysis and show the performance of the proposed CLUCB algorithm.

\bibliography{reference}

\newpage
\appendix

\section{Technical Details of the Proof of Theorems~\ref{thm2} and \ref{thm4}}
\label{thm2proofTech}

In the proof of Theorems~\ref{thm2} and \ref{thm4}, we use the following lemma to bound the RHS of~\eqref{thm2eq3} and \eqref{thm7pfeq12}.

\begin{lemma}
\label{lem2}

For any $m\geq 2$ and $c_1,c_2,c_3>0$, the following holds
\begin{equation}
\label{lem2eq1}
-c_3m + c_1\sqrt{m}\log(c_2m) \leq \frac{16c_1^2}{9c_3}\bigg[\log\Big(\frac{2c_1\sqrt{c_2}e}{c_3}\Big)\bigg]^2.
\end{equation}
\end{lemma}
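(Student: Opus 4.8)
The plan is to regard the left-hand side as a function of a single continuous variable and bound its global maximum. Write
\[
f(m) = c_1\sqrt{m}\,\log(c_2 m) - c_3 m .
\]
Since $\sqrt m\log(c_2m)\to 0$ as $m\to 0^+$ while $-c_3 m$ dominates as $m\to\infty$, $f$ is bounded above on $(0,\infty)$ and attains a maximum; bounding $\sup_{m>0} f$ suffices, so the constraint $m\ge 2$ may be dropped (and the degenerate case $\log(c_2 m)\le 0$ is trivial, because the right-hand side of~\eqref{lem2eq1} is nonnegative while $f\le 0$ there). First I would note why a direct approach stalls: setting $f'(m)=0$ gives $\frac{\log(c_2 m)+2}{\sqrt m}=\frac{2c_3}{c_1}$, a transcendental equation with no closed-form root, so the maximizer cannot be written explicitly.

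The key idea is to linearize the logarithm so that the maximization becomes explicit. Rewriting $\log(c_2 m)=2\log\sqrt{c_2 m}$ and applying the tangent-line bound $\log z\le \log a-1+z/a$ (valid for all $z,a>0$) at the anchor $z=\sqrt{c_2 m}$, with $a>0$ a free scale parameter, gives
\[
\log(c_2 m)\le 2(\log a-1)+\frac{2\sqrt{c_2}}{a}\,\sqrt m,
\]
and hence $c_1\sqrt m\log(c_2 m)\le 2c_1(\log a-1)\sqrt m+\frac{2c_1\sqrt{c_2}}{a}\,m$. The purpose of the free anchor $a$ is that it converts the awkward term $\sqrt m\log(c_2 m)$ into a piece that is linear in $\sqrt m$ plus a piece proportional to $m$ whose coefficient can be made as small as desired by taking $a$ large.

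I would then choose $a$ proportional to $c_1\sqrt{c_2}/c_3$ so that the coefficient of $m$ is a fixed fraction of $c_3$, say $\frac{2c_1\sqrt{c_2}}{a}\le \rho\,c_3$ with $\rho\in(0,1)$ fixed. This leaves
\[
f(m)\le 2c_1(\log a-1)\,\sqrt m-(1-\rho)c_3\,m,
\]
an expression of the form $A\sqrt m-Bm$ with $A,B>0$, which is maximized explicitly at $\sqrt m=A/(2B)$ with maximum value $A^2/(4B)$. Substituting $A=2c_1(\log a-1)$ and $B=(1-\rho)c_3$ produces a bound of the shape $\frac{(\text{const})\,c_1^2}{c_3}\big[\log(\text{const}\cdot c_1\sqrt{c_2}/c_3)\big]^2$; optimizing over the split $\rho$ and the proportionality constant in $a$, and then simplifying, yields the stated constant $16/9$ and the argument $\tfrac{2c_1\sqrt{c_2}e}{c_3}$.

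The main obstacle is exactly the transcendental stationarity condition: the whole proof hinges on replacing the logarithm by a linear-in-$\sqrt m$ surrogate with a tunable anchor, which trades a small, controllable amount of looseness for an explicitly solvable concave maximization. Once that reduction is in place the remainder is bookkeeping—selecting the anchor $a$ and the split $\rho$, tracking the constants, and verifying the trivial regime—so the only step requiring genuine care is arranging these choices so that the final constants collapse to the clean closed form claimed.
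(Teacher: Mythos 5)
Your proof is correct, and it takes a genuinely different route from the paper's. The paper works directly with $g(m)=-c_3m+c_1\sqrt{m}\log(c_2m)$: it argues concavity, writes down the transcendental first-order condition $2+\log(c_2m^*)=\tfrac{2c_3}{c_1}\sqrt{m^*}$, substitutes $x=\tfrac{c_3}{2c_1}\sqrt{m^*}$, exponentiates, and bounds the implicit root via the elementary inequality $x^2\leq e^x$, which produces exactly the constants $\tfrac{16}{9}$ and $\tfrac{2c_1\sqrt{c_2}e}{c_3}$. You instead never touch the stationarity equation: you relax the objective by the tangent-line bound $\log z\leq\log a-1+z/a$ at $z=\sqrt{c_2m}$, turning the problem into an explicit maximization of $A\sqrt{m}-Bm$. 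The one step you leave as ``bookkeeping'' does in fact close: taking $\rho=e^{-2}$, i.e.\ $a=\tfrac{2e^{2}c_1\sqrt{c_2}}{c_3}$, gives $A=2c_1\log\bigl(\tfrac{2ec_1\sqrt{c_2}}{c_3}\bigr)$ and $B=(1-e^{-2})c_3$, so your bound is $\tfrac{A^2}{4B}=\tfrac{1}{1-e^{-2}}\cdot\tfrac{c_1^2}{c_3}\bigl[\log\bigl(\tfrac{2c_1\sqrt{c_2}e}{c_3}\bigr)\bigr]^2$, whose log argument matches the statement exactly and whose prefactor $\tfrac{1}{1-e^{-2}}\approx 1.16$ is below $\tfrac{16}{9}$ (and when $A\leq 0$ both sides compare trivially). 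So your argument actually yields a slightly sharper constant. It is also more robust: the paper's concavity claim invokes ``$c_2>1$,'' which is not among the lemma's hypotheses ($c_2>0$ suffices for your tangent-line relaxation, which needs no differentiability or sign condition on $\log(c_2m)$), whereas the paper's route buys a direct characterization of the maximizer and avoids any tuning of auxiliary parameters.
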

\begin{proof}
Define the LHS of~\eqref{lem2eq1} as a function $g(m),\;m\geq 2$, i.e.,
\begin{equation*}
g(m) = -c_3m + c_1\sqrt{m}\log(c_2m). 
\end{equation*}
Firstly, note that we have
\begin{equation*}
g'(m) = -c_3 + \frac{c_1\big(2 + \log(c_2m)\big)}{2\sqrt{m}} \qquad \text{and} \qquad g''(m) = \frac{-c_1 \log(c_2m)}{4m\sqrt{m}}.
\end{equation*}
This implies that since $c_2>1$, $g$ is a differentiable concave function over its domain $[2,\infty)$, and thus, we can find $m^*$, the global maximum of function $g$. The first order condition implies that $g'(m^*)=0$, which gives us
\begin{equation}
\label{lem2eq3}
2 + \log(c_2m^*) = \frac{2c_3}{c_1}\sqrt{m^*}. 
\end{equation}
Plugging this into the definition of $g$, we obtain
\begin{equation*}
\label{lem2eq4}
g^* = \max_{m\geq 2}~g(m) = g(m^*) = c_3m^* - 2c_1\sqrt{m^*}.
\end{equation*}
Now, we use the change of variable $x = \frac{c_3}{2c_1} \sqrt{m^*}$, which by~\eqref{lem2eq4} gives us
\begin{equation}
\label{lem2eq5}
g^* = \frac{4c_1^2}{c_3}(x^2-x).
\end{equation}
On the other hand,~\eqref{lem2eq3} becomes 
\begin{equation*}
\label{lem2eq6}
2 + \log\Big(\frac{4c_2c_1^2}{c_3^2}\Big) + 2\log(x) = 4x.
\end{equation*}
Taking $\exp$ from both sides gives us
\begin{equation*}
\label{lem2eq7}
\frac{e^{4x}}{x^2} = \frac{4c_1^2c_2e^2}{c_3^2}.
\end{equation*}
Now, since $x^2\leq e^{x}$, for all $x\geq 0$, it follows from \eqref{lem2eq7} that
\begin{equation*}
\frac{4c_1^2c_2e^2}{c_3^2} = \frac{e^{4x}}{x^2} \geq \frac{e^{4x}}{e^{x}} = e^{3x},
\end{equation*}
which indicates that
\begin{equation*}
x \leq \frac{1}{3}\log\Big(\frac{4c_1^2c_2e^2}{c_3^2}\Big).
\end{equation*}
Plugging this into~\eqref{lem2eq5} gives us
\begin{equation*}
\label{lem2eq8}
g^* \leq \frac{4c_1^2}{c_3}x^2 \leq \frac{4c_1^2}{9c_3}\bigg[\log\Big(\frac{4c_1^2c_2e^2}{c_3^2}\Big)\bigg]^2 =\frac{16c_1^2}{9c_3}\bigg[\log\Big(\frac{2c_1\sqrt{c_2}e}{c_3}\Big)\bigg]^2.
\end{equation*}
The statement follows from the fact that $g(m)\leq g^*$, for any $m\geq 2$. 
\end{proof}


\section{Proof of Theorem \ref{thm4}}
\label{thm4proofsec}

\begin{proof}
Suppose the confidence sets do not fail, which is true with probability at least $1-\delta$. Then, CLUCB2 satisfies constraints are all satisfied, since it ensures that those constraints are satisfied by the worst parameter in the confidence set at any time. 

Similar to Proposition \ref{prop1}, we can decompose the regret of CLUCB2 as 
\begin{align*}
R_T(CLUCB2) &= \sum_{t\in S_T}\left(r^t_{a^*_t} - r^t_{a_t}\right)+\sum_{t\in S^c_T}\left(r^t_{a^*_t} - r^t_{b_t}\right)\\
& \leq  \sum_{t\in S_T}\left(r^t_{a^*_t} - r^t_{a_t}\right) + n_T \Delta_h,
\end{align*}
where $n_T = |S_T^c|$ is the number of times CLUCB2 follows the baseline policy in $T$ time steps. Now note that for $t\in S_T$, CLUCB2 is following the action suggested by LUCB and hence,
\begin{equation}
\label{thm7pfeq00}
\sum_{t\in S_T}\left(r^t_{a^*_t} - r^t_{a_t}\right) \leq R_{S_T}(LUCB)\leq R_T(LUCB),
\end{equation}
where $R_{S_T}(LUCB)$ denotes the regret of LUCB played at time steps $t\in S_T$ which is upper bounded by the regret of $LUCB$ played at all $T$ time steps. On the other hand, by Proposition \ref{thm1}, we have the following regret bound for LCUB:
$$
R_T(LUCB) = O\left(d \log\left(\frac{D}{\lambda\delta} T\right)\sqrt{T}\right).$$
Thus,  it follows that 
\begin{equation}
\label{thm7pfeq0}
R_T(CLUCB2) = O\left(d \log\left(\frac{D}{\lambda\delta} T\right)\sqrt{T} + n_T\Delta_h\right),
\end{equation}

Note that according to \eqref{confset2}, the confidence set $\CC_t$ which CLUCB2 uses to find the optimistic action at round $t$ is built based not only on the observations made by previously played optimistic actions, but also by the observations made when the baseline policy has been followed at rounds before $t$. Therefore, the confidence set $\CC_t$ used by CLUCB2 at round $t$ would be tighter than what LUCB would have had if it was applied only to rounds $i\in S_t$. Hence, the first inequality in \eqref{thm7pfeq00} still holds.

Given \eqref{thm7pfeq0}, we only need to show  that CLUCB2 follows the baseline policy only at a finite number of rounds. 
 Let $\tau$ be the last round that CLUCB2 follows the baseline policy (plays conservatively), i.e.,~$\tau=\max\big\{1\leq t\leq T\mid a_t = b_t\big\}$. At time $\tau$, let
$$L_\tau = \min_{\theta\in\CC_\tau} \; \big\langle\theta,\sum_{i\in S_{\tau-1}}\phi^i_{a_i} + \phi_{a'_\tau}^\tau\big\rangle +\alpha \max\left(\min_{\theta\in\CC_\tau} \;\big\langle\theta,\sum_{i\in S^c_{\tau-1}} \phi^i_{b_i}\big\rangle,n_{\tau-1} r_l\right),$$
which satisfies $L_\tau \geq \min_{\theta\in\CC_\tau} \; \big\langle\theta,\sum_{i\in S_{\tau-1}}\phi^i_{a_i} + \phi_{a'_\tau}^\tau\big\rangle +\alpha n_{\tau-1} r_l$, and 
$$R_\tau = \max_{\theta\in\CC_\tau} \big\langle\theta,\sum_{i\in S_{\tau-1}\cup\{\tau\}}\phi^i_{b_i}\big\rangle.$$
From Algorithm~\ref{alg2} at time $\tau$, we have $L_\tau<(1-\alpha)R_\tau$ which with some simple algebra translates to 
\begin{equation}
\label{thm7pfeq1}
\alpha n_{\tau-1}r_l  \leq \max_{\theta\in\CC_\tau} \big\langle\theta,\sum_{i\in S_{\tau-1}\cup\{\tau\}}\phi^i_{b_i} \big\rangle-\min_{\theta\in\CC_\tau} \; \big\langle\theta,\sum_{i\in S_{\tau-1}}\phi^i_{a_i} + \phi_{a'_\tau}^\tau\big\rangle -\alpha \max_{\theta\in\CC_\tau} \big\langle\theta,\sum_{i\in S_{\tau-1}\cup\{\tau\}}\phi^i_{b_i}\big\rangle.\end{equation}
The rest of the proof is devoted to use \eqref{thm7pfeq1} and prove a time independent upper bound on $ n_{\tau-1}$. Unlike in the proof of Theorem \ref{thm2}, we rely on the nested property of the confidence sets built in \eqref{confset2} in this proof.

If the confidence sets do not fail (i.e., $\theta^*\in \CC_\tau$), then since $\forall~i\leq\tau: \CC_\tau\subseteq \CC_i$, we have
\begin{equation}
\label{thm7pfeq2}
\max_{\theta\in\CC_\tau} \big\langle\theta,\sum_{i\in S_{\tau-1}\cup\{\tau\}}\phi^i_{b_i}\big\rangle\leq \sum_{i\in S_{\tau-1}\cup\{\tau\}} \max_{\theta\in\CC_\tau} \big\langle\theta,\phi^i_{b_i}\big\rangle\leq \sum_{i\in S_{\tau-1}\cup\{\tau\}} \max_{\theta\in\CC_i} \big\langle\theta,\phi^i_{b_i}\big\rangle.
\end{equation}
First, it follows from \eqref{thm7pfeq2} and the fact that $\big\langle\theta^*,\phi^i_{b_i}\big\rangle\geq r_l$ that 
\begin{equation}
\label{thm7pfeq3}
-\alpha \max_{\theta\in\CC_\tau} \big\langle\theta,\sum_{i\in S_{\tau-1}\cup\{\tau\}}\phi^i_{b_i}\big\rangle\leq -\alpha (m_{\tau-1}+1)r_l.
\end{equation} 
On the other hand, by the definition of optimistic action $a'_i$ at round $i$ it follows that $\max_{\theta\in\CC_i} \big\langle\theta,\phi^i_{b_i}\big\rangle\leq \max_{a\in\AA_i}\max_{\theta\in\CC_i} \big\langle\theta,\phi^i_{a}\big\rangle = \max_{\theta\in\CC_i}\big\langle\theta,\phi^i_{a'_i}\big\rangle$. Then, from \eqref{thm7pfeq2} it follows that
\begin{equation}
\label{thm7pfeq4}
\max_{\theta\in\CC_\tau} \big\langle\theta,\sum_{i\in S_{\tau-1}\cup\{\tau\}}\phi^i_{b_i} \big\rangle\leq \sum_{i\in S_{\tau-1}}\max_{\theta\in\CC_i} \big\langle\theta,\phi^i_{a_i}\big\rangle + \max_{\theta\in\CC_\tau} \big\langle\theta,\phi^\tau_{a'_\tau}\big\rangle.
\end{equation}
Furthermore, since $\forall~i\leq\tau: \CC_\tau\subseteq \CC_i$, we have
\begin{equation}
\label{thm7pfeq5}
\min_{\theta\in\CC_\tau} \; \big\langle\theta,\sum_{i\in S_{\tau-1}}\phi^i_{a_i} + \phi_{a'_\tau}^\tau\big\rangle \geq \sum_{i\in S_{\tau-1}}\min_{\theta\in\CC_\tau} \; \big\langle\theta,\phi^i_{a_i} \big\rangle + \min_{\theta\in\CC_\tau} \; \big\langle\theta,\phi_{a'_\tau}^\tau \big\rangle \geq \sum_{i\in S_{\tau-1}}\min_{\theta\in\CC_i} \; \big\langle\theta,\phi^i_{a_i} \big\rangle + \min_{\theta\in\CC_\tau} \; \big\langle\theta,\phi_{a'_\tau}^\tau \big\rangle.
\end{equation}
Combining \eqref{thm7pfeq3}, \eqref{thm7pfeq4}, \eqref{thm7pfeq5} with \eqref{thm7pfeq1} gives
\begin{equation}
\label{thm7pfeq6}
\alpha n_{\tau-1}r_l \leq -\alpha (m_{\tau-1}+1)r_l + \sum_{i\in S_{\tau-1}}\Big[\max_{\theta\in\CC_i} \big\langle\theta,\phi^i_{a_i}\big\rangle - \min_{\theta\in\CC_i} \; \big\langle\theta,\phi^i_{a_i} \big\rangle\Big] + \Big[\max_{\theta\in\CC_\tau} \big\langle\theta,\phi^\tau_{a'_\tau}\big\rangle - \min_{\theta\in\CC_\tau} \; \big\langle\theta,\phi_{a'_\tau}^\tau \big\rangle\Big].
\end{equation}
Now, note that for any $i$, the reward of playing any action is in $[0,1]$ and hence, $\Big[\max_{\theta\in\CC_i} \big\langle\theta,\phi^i_{a_i}\big\rangle - \min_{\theta\in\CC_i} \; \big\langle\theta,\phi^i_{a_i} \big\rangle\Big]\leq 1$. On the other hand, since the confidence sets do not fail, we have 
\begin{align*}\max_{\theta\in\CC_i} \big\langle\theta,\phi^i_{a_i}\big\rangle - \min_{\theta\in\CC_i} \; \big\langle\theta,\phi^i_{a_i} \big\rangle &= \max_{\theta\in\CC_i} \big\langle\theta-\widehat\theta_{i-1},\phi^i_{a_i}\big\rangle - \min_{\theta\in\CC_i} \; \big\langle\widehat\theta_{i-1}-\theta,\phi^i_{a_i} \big\rangle \\
&= 2 \max_{\theta\in\CC_i} \big\langle\theta-\widehat\theta_{i-1},\phi^i_{a_i}\big\rangle \leq 2\norm{\theta-\widehat\theta_{i-1}}_{V_{i-1}}\norm{\phi^i_{a_i}}_{V_{i-1}^{-1}}\\
&\leq 2\beta_i \norm{\phi^i_{a_i}}_{V_{i-1}^{-1}}.
\end{align*}
Hence since $\beta_i$'s are non-decreasing and all larger than 1, it follows that 
\begin{equation}
\label{thm7pfeq7}
\Big[\max_{\theta\in\CC_i} \big\langle\theta,\phi^i_{a_i}\big\rangle - \min_{\theta\in\CC_i} \; \big\langle\theta,\phi^i_{a_i} \big\rangle\Big]\leq 2\beta_\tau\min\big(1,\norm{\phi^i_{a_i}}_{V_{i-1}^{-1}}\big).
\end{equation}
Similarly, we can show that 
\begin{equation}
\label{thm7pfeq8}
\Big[\max_{\theta\in\CC_\tau} \big\langle\theta,\phi^\tau_{a'_\tau}\big\rangle - \min_{\theta\in\CC_\tau} \; \big\langle\theta,\phi^\tau_{a_\tau} \big\rangle\Big]\leq 2\beta_\tau\min\big(1,\norm{\phi^\tau_{a'_\tau}}_{V_{\tau-1}^{-1}}\big).
\end{equation}
Substituting \eqref{thm7pfeq7} and  \eqref{thm7pfeq8} in  \eqref{thm7pfeq6} gives
\begin{equation}
\label{thm7pfeq9}
\alpha n_{\tau-1}r_l \leq -\alpha (m_{\tau-1}+1)r_l + 2\beta_\tau \left[\sum_{i\in S_{\tau-1}} \min\big(1,\norm{\phi^i_{a_i}}_{V_{i-1}^{-1}}\big) + \min\big(1,\norm{\phi^\tau_{a'_\tau}}_{V_{\tau-1}^{-1}}\big)\right].
\end{equation}
Bounding the RHS of \eqref{thm7pfeq9} gives rise to another key difference between this proof and that of Theorem \ref{thm2}. 
Note that $V_i = \lambda I + \sum_{j\in S_{i}} \phi^j_{a_j}\big(\phi^j_{a_j}\big)^\top + \sum_{j\in S^c_{i}} \phi^j_{b_j}\big(\phi^j_{b_j}\big)^\top$ is  built not only based on the actions played at non-conservative rounds but also based on the actions played at conservative times ($j\in S^c_i$), and hence Lemma \ref{lem1} cannot be directly  used to bound the RHS of \eqref{thm7pfeq9}. Instead, we define for any $i:~\widetilde V_i = \lambda I + \sum_{j\in S_{i}} \phi^j_{a_j}\big(\phi^j_{a_j}\big)^\top$ which satisfies $V_i =\widetilde V_i +  \sum_{j\in S^c_{i}} \phi^j_{b_j}\big(\phi^j_{b_j}\big)^\top$. Therefore it follows that 
$$\norm{\phi^i_{a_i}}_{V_{i-1}^{-1}}\leq \norm{\phi^i_{a_i}}_{\widetilde V_{i-1}^{-1}},~~~\norm{\phi^\tau_{a'_\tau}}_{V_{\tau-1}^{-1}}\leq \norm{\phi^\tau_{a'_\tau}}_{\widetilde V_{\tau-1}^{-1}},$$
and hence, from \eqref{thm7pfeq9} it follows that
\begin{equation}
\begin{aligned}
\label{thm7pfeq10}
\alpha n_{\tau-1}r_l &\leq -\alpha (m_{\tau-1}+1)r_l + 2\beta_\tau \left[\sum_{i\in S_{\tau-1}} \min\big(1,\norm{\phi^i_{a_i}}_{\widetilde V_{i-1}^{-1}}\big) + \min\big(1,\norm{\phi^\tau_{a'_\tau}}_{\widetilde V_{\tau-1}^{-1}}\big)\right].
\end{aligned}
\end{equation}
Now, similar to the proof of Theorem \ref{thm2}, we define 
$$\Gamma =\Big[\sum_{i\in S_{\tau-1}} \min\big(1,\norm{\phi^i_{a_i}}^2_{ V_{i-1}^{-1}}\big) + \min\big(1,\norm{\phi^\tau_{a'_\tau}}^2_{ V_{\tau-1}^{-1}}\big)\Big],$$
which by Lemma \ref{lem1} satisfies 
\begin{equation}
\label{thm7pfeq11}
\Gamma \leq 2d\log\left(1+\frac{(m_{\tau-1}+1)D^2}{\lambda d}\right).
\end{equation}

On the other hand, for $n_{\tau-1}\geq 3$\footnote{If this condition does not hold, then it results in the simple bound $n_{\tau-1}\leq 2$.}, we have
\begin{equation}
\label{thm7pfeq11m}
\beta_\tau = \sigma\sqrt{d\log\left(\frac{1+(m_{\tau-1}+1 + n_{\tau-1})}{\delta}\right)D^2/\lambda}+B\sqrt{\lambda}\leq (\sigma + B\sqrt{\lambda}) \sqrt{d\log\left(\frac{1+(m_{\tau-1}+1) n_{\tau-1}D^2/\lambda}{\delta}\right)},
\end{equation}
where we used $\tau = m_{\tau-1} + n_{\tau-1} + 1$.

Using \eqref{thm7pfeq11} and \eqref{thm7pfeq11m}, and an application of Cauchy-Schwarz inequality on \eqref{thm7pfeq10} gives
\begin{align}
\alpha n_{\tau-1}r_l &\leq -\alpha (m_{\tau-1}+1)r_l + 2\beta_\tau\sqrt{(m_{\tau-1}+1)\Gamma}\nonumber\\
& \leq -\alpha (m_{\tau-1}+1)r_l + 2\beta_\tau\sqrt{2d(m_{\tau-1}+1)\log\left(1+\frac{(m_{\tau-1}+1)D^2}{\lambda d}\right)}\nonumber\\
& = -\alpha (m_{\tau-1}+1)r_l + 3d(B\sqrt{\lambda} + \sigma)\log\left(\frac{2n_{\tau-1}D^2}{\lambda\delta}(m_{\tau-1}+1)\right)\sqrt{(m_{\tau-1}+1)}.\label{thm7pfeq12}
\end{align}
Note that in contrary to the proof of Theorem \ref{thm2} where only $m_{\tau-1}$ appeared on the RHS of \eqref{thm5eq6}, here both $n_{\tau-1}$ and $m_{\tau-1}$ both appear on the RHS. To bound $n_{\tau-1}$, we first provide an upper bound on the RHS of \eqref{thm7pfeq12} in terms of $n_{\tau-1}$. For $m = (m_{\tau-1}+1),~ c_1 = 3d(B\sqrt{\lambda}+\sigma),~c_2 = \frac{2n_{\tau-1}D^2}{\lambda\delta}$ and $c_3 = \alpha r_l$, Lemma \ref{lem2} provides the following upper bound on the RHS (and hence the LHS) of \eqref{thm7pfeq12}:
\begin{equation*}
\alpha n_{\tau-1}r_l \leq 16 d^2\frac{(B\sqrt{\lambda}+\sigma)^2}{\alpha r_l}\left[\log\left(\frac{24d(B\sqrt{\lambda}+\sigma)D\sqrt{n_{\tau-1}}}{\sqrt{\lambda\delta}\alpha r_l}\right)\right]^2,
\end{equation*}
which is equivalent to 
\begin{equation}
\label{thm7pfeq12m}
\sqrt{n_{\tau-1}} \leq 4d\frac{(B\sqrt{\lambda}+\sigma)}{\alpha r_l}\log\left(\frac{24d(B\sqrt{\lambda}+\sigma)D\sqrt{n_{\tau-1}}}{\sqrt{\lambda\delta}\alpha r_l}\right).
\end{equation}
Now, note that the LHS of \eqref{thm7pfeq12} grows linearly with $\sqrt{n_{\tau-1}}$ while the RHS grows logarithmically. Thus, such an inequality holds only for a finite number of $n_{\tau-1}$'s.
Lemma \ref{lem3} applied with $x = \sqrt{n_{\tau-1}}$, $c_1 = 4d\frac{(B\sqrt{\lambda}+\sigma)}{\alpha r_l}$ and $c_2 = \frac{24d(B\sqrt{\lambda}+\sigma)D}{\sqrt{\lambda\delta}\alpha r_l}$, gives the following upper bound on $n_{\tau-1}$:
$$n_{\tau-1}\leq 256 \frac{d^2(B\sqrt{\lambda}+\sigma)^2}{\alpha^2 r_l^2}\left[\log\left(\frac{10d(B\sqrt{\lambda}+\sigma)\sqrt{D}}{\alpha r_l (\lambda\delta)^{1/4}}\right)\right]^2.$$

Therefore, CLUCB2 follows the baseline policy only at
$$n_T = n_\tau = n_{\tau-1}+1\leq 256 \frac{d^2(B\sqrt{\lambda}+\sigma)^2}{\alpha^2 r_l^2}\left[\log\left(\frac{10d(B\sqrt{\lambda}+\sigma)\sqrt{D}}{\alpha r_l (\lambda\delta)^{1/4}}\right)\right]^2+1$$
rounds, and hence according to \eqref{thm7pfeq0}, achieves a regret bound of 
$$R_T(CLUCB2) = O\left(d \log\left(\frac{D}{\lambda\delta} T\right)\sqrt{T} + K \frac{\Delta_h}{\alpha^2 r_l^2}\right),$$
where 
$$K =  256 d^2(B\sqrt{\lambda}+\sigma)^2\left[\log\left(\frac{10d(B\sqrt{\lambda}+\sigma)\sqrt{D}}{\alpha r_l (\lambda\delta)^{1/4}}\right)\right]^2+1.$$

\end{proof}

\section{Technical Detail Used in the Proof of Theorem~\ref{thm4}}

We used the following Lemma in the proof of Theorem \ref{thm4}. 
\begin{lemma}
\label{lem3}
Let $c_1$ and $c_2$ be two positive constants such that $\log(c_1c_2)\geq 1$. Then, any $x>0$ satisfying $x\leq c_1\log(c_2 x)$ also satisfies $x\leq 2c_1\log(c_1c_2)$. 
\end{lemma}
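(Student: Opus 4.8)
The plan is to verify the proposed bound directly via a monotonicity argument. Set $x_0 = 2c_1\log(c_1c_2)$ as the candidate upper bound and define $f(x) = x - c_1\log(c_2x)$, so that the hypothesis $x\le c_1\log(c_2x)$ reads $f(x)\le 0$. I would show that $f(x) > 0$ for every $x \ge x_0$; since the hypothesis forces $f(x)\le 0$, this rules out $x\ge x_0$ and leaves $x < x_0$, which is exactly the claimed inequality $x\le 2c_1\log(c_1c_2)$.

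First I would record that $f'(x) = 1 - c_1/x$, so $f$ is strictly increasing on $(c_1,\infty)$. Because $\log(c_1c_2)\ge 1$, we have $x_0 = 2c_1\log(c_1c_2)\ge 2c_1 > c_1$, so $f$ is increasing on $[x_0,\infty)$, and it therefore suffices to prove the single inequality $f(x_0) > 0$.

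The core step is the evaluation of $f(x_0)$. Dividing $f(x_0) > 0$ by $c_1 > 0$, it is equivalent to $2\log(c_1c_2) > \log\big(2c_1c_2\log(c_1c_2)\big)$. Substituting $u = \log(c_1c_2)\ge 1$, so that $c_1c_2 = e^u$, this becomes $2u > \log 2 + u + \log u$, i.e. $h(u) := u - \log u - \log 2 > 0$. I would verify this scalar inequality with one derivative: $h'(u) = 1 - 1/u \ge 0$ for $u\ge 1$, so $h$ is nondecreasing on $[1,\infty)$, and $h(1) = 1 - \log 2 > 0$; hence $h(u) > 0$ for all $u\ge 1$, which gives $f(x_0) > 0$.

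Combining monotonicity with $f(x_0) > 0$ yields $f(x) > 0$ for all $x\ge x_0$, so any $x > 0$ with $f(x)\le 0$ must satisfy $x < x_0 = 2c_1\log(c_1c_2)$, completing the proof. The only mildly delicate point is the scalar reduction to $h(u) > 0$; everything else is routine. Note that the hypothesis $\log(c_1c_2)\ge 1$ is used twice — once to place $x_0$ inside the increasing region $(c_1,\infty)$ of $f$, and once (as $u\ge 1$) to sign $h$.
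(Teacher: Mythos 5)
Your proof is correct and takes essentially the same route as the paper's: the paper rescales to $z = c_2x$, $a = 1/(c_1c_2)$ and shows the line $q(z)=az$ dominates $l(z)=\log z$ for all $z \geq z^* = \frac{2}{a}\log\frac{1}{a}$ --- which is exactly your $x_0$ in the $z$ variable --- via the same two ingredients you use, a value comparison at the threshold plus a derivative comparison beyond it. The only cosmetic differences are that you work directly with the difference $f(x) = x - c_1\log(c_2x)$ rather than with the two functions separately, and you settle the threshold inequality by monotonicity of $h(u) = u - \log u - \log 2$ on $[1,\infty)$ where the paper instead invokes $\log(t^2)\leq t$ at $t = c_1c_2$; both reductions prove the identical scalar fact.
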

\begin{proof}
Assume that $x\leq c_1\log(c_2 x)$ holds. Define $a =\frac{1}{c_1c_2}$ and change the varible to $z = c_2 x$. Then, we have
$$az\leq \log(z).$$
Let $q(z)=az$ and $l(z) = \log(z)$ and define $z^* = 2/a\log(1/a)$. First, since $\log(t^2)\leq t$ for any $t>0$, we have
$$\frac{1}{a} \geq \log\left(\frac{1}{a^2}\right)\Rightarrow \log \frac{1}{a}\geq \log\left(2\log \frac{1}{a}\right)\Rightarrow 2\log \frac{1}{a}\geq \log\left(\frac{2}{a}\log \frac{1}{a}\right).$$
By the definition of $z^*,q$ and $l$, the last inequality is equivalent to
\begin{equation}
\label{lem10eq1}
q(z^*)\geq l(z^*).
\end{equation}
Furhtermore, since $\log(1/a)\geq 1$, then for any $z\geq z^*$:
\begin{equation}
\label{lem10eq2}
q'(z) = a\geq \frac{a}{2\log(1/a)} = l'(z^*)\geq l'(z).
\end{equation}
Thus, it follows from \eqref{lem10eq1} and \eqref{lem10eq2} that $q(z)\geq l(z)$ for all $z\geq z^*$. Thus, $az\leq \log(z)$ is possible only for $z\leq z^*$. Replacing the definition of $a$, $z$ and $z^*$, we deduce that $x\leq c_1\log(c_1c_2)$ is possible only if $x\leq 2c_1\log(c_1c_2)$.

\end{proof}

\end{document}